\newif\ifPrePrint
\title{Static Hovering Realization for Multirotor Aerial Vehicles with Tiltable Propellers}
\author{Mahmoud Hamandi\thanks{Corresponding author.}
    \affiliation{ 
	\small Post-Doctoral Fellow\\
	\small Department of Aerospace Engineering, \\
	\small Khalifa University Center for\\
	\small Autonomous Robotic Systems \\(KUCARS),\\
	\small Khalifa University, \\
	\small Abu Dhabi, United Arab Emirates.\\
    \small Email: mahmoud.hamandi@ku.ac.ae
    }	
}
\author{Lakmal~Seneviratne
    \affiliation{
	\small Professor\\
	\small Department of Mechanical Engineering, \\
	\small and Director of
	\small Khalifa University Center for \\
	\small Autonomous Robotic Systems \\(KUCARS),\\
	\small Khalifa University, \\
	\small Abu Dhabi, United Arab Emirates.\\
    \small Email: lakmal.seneviratne@ku.ac.ae
    }	
}
\author{Yahya~Zweiri
    \affiliation{
	\small Associate Professor\\
	\small Department of Aerospace Engineering, \\
	\small Khalifa University Center for Autonomous Robotic Systems (KUCARS),\\
	\small Khalifa University, \\
	\small Abu Dhabi, United Arab Emirates.\\
    \small Email: yahya.zweiri@ku.ac.ae
    }	
}
\def\BibTeX{{\rm B\kern-.05em{\sc i\kern-.025em b}\kern-.08em
    T\kern-.1667em\lower.7ex\hbox{E}\kern-.125emX}}
\providecommand{\bm}{\pmb}
\newtheorem{prop}{Proposition}
\newtheorem{thm}{Theorem}
\newtheorem{lem}[thm]{Lemma}
\theoremstyle{definition}
\newtheorem{dfn}{Definition}
\theoremstyle{remark}
\newtheorem{rmk}{Remark}
\newtheorem{ass}{Assumption}
\newtheorem{property}{Property}
\definecolor{myred}{rgb}{0.8500, 0.3250, 0.0980}
\definecolor{myblue}{rgb}{0    0.4470    0.7410}
\definecolor{mygreen}{rgb}{0.4660, 0.6740, 0.1880}
\definecolor{mypurple}{RGB}{139,0,139}
\newcommand\red[1]{{\textcolor{red}{#1}}}
\newcommand{\rev}[1]{{#1}}
\newcommand{\fig}{Fig.~}	
\newcommand{\eqn}{Eq.~}	
\newcommand{\eqns}{Eqs.~}	
\newcommand{\eqrefs}[2]{(\ref{#1}), (\ref{#2})}
\newcommand{\wrt}{w.r.t.}
\newcommand{\ie}{\textit{i.e.},~}
\DeclareMathOperator*{\rank}{rank}
\DeclareMathOperator*{\interior}{int}
\DeclareMathOperator*{\argmax}{arg\,max}
\DeclareMathOperator*{\Image}{Im}
\DeclarePairedDelimiter{\norm}{\lVert}{\rVert} 
\newcommand\at[2]{\left.#1\right|_{#2}}
\newcommand{\vect}[1]{\bm{#1}}		
\newcommand{\matr}[1]{\bm{#1}}		
\newcommand{\nR}[1]{\mathbb{R}^{#1}}		
\newcommand{\nN}[1]{\mathbb{N}^{#1}}		
\newcommand{\upperRomannumeral}[1]{\uppercase\expandafter{\romannumeral#1}}	
\newcommand{\Fm}{\vect{F}}
\newcommand{\mv}{\vect{m}}
\newcommand{\wv}{\vect{w}}
\bmdefine{\bg}{g}
\bmdefine{\bomega}{\omega}
\bmdefine{\bOmega}{\Omega}
\bmdefine{\boldetha}{\eta}
\newcommand{\frameV}{\mathcal{F}}		
\newcommand{\origin}{O}						
\newcommand{\vX}{\vect{x}}					
\newcommand{\vY}{\vect{y}}					
\newcommand{\vZ}{\vect{z}}					
\newcommand{\pos}{\vect{p}}				
\newcommand{\dpos}{{\vect{v}}}		
\newcommand{\ddpos}{\dot{\vect{v}}}	
\newcommand{\rotMat}{\matr{R}}			
\newcommand{\angVel}{\vect{\omega}}				
\newcommand{\angAcc}{\dot{\vect{\omega}}}		
\renewcommand{\skew}[1]{\matr{S}(#1)}				
\newcommand{\eye}[1]{\matr{I}_{#1}}		
\newcommand{\frameW}{\frameV_W}			
\newcommand{\originW}{\origin_W}		
\newcommand{\xW}{\vX_W}				
\newcommand{\yW}{\vY_W}				
\newcommand{\zW}{\vZ_W}				
\newcommand{\massB}{m}
\newcommand{\inertiaB}{\matr{J}_B}
\newcommand{\frameB}{\frameV_{B}}			
\newcommand{\originB}{O_{B}}					
\newcommand{\xB}{\vX_{B}}								
\newcommand{\yB}{\vY_{B}}								
\newcommand{\zB}{\vZ_{B}}								
\newcommand{\pB}{\pos_{B}}						
\newcommand{\dpB}{\dpos_B} 
\newcommand{\ddpB}{\ddpos_{B}}				
\newcommand{\angVelB}{\angVel_B}		
\newcommand{\angAccB}{\angAcc_B}		
\newcommand{\fB}{\vect{f}_B}       
\newcommand{\mB}{\vect{m}_B}       
\newcommand{\pPB}[1]{{}^{B}\pos_{#1}}						
\newcommand{\vP}[1]{{}^{B}\vect{v}_{#1}}						
\newcommand{\VP}{\vect{V}}						
\newcommand{\vO}[1]{{}^{B}\bar{\vect{v}}_{#1}}						
\newcommand{\uP}[1]{u_{#1}}						
\newcommand{\uPDer}[1]{\dot{u}_{#1}}
\newcommand{\ai}[1]{\alpha_{#1}}
\newcommand{\betai}[1]{\beta_{#1}}
\newcommand{\aid}[1]{\dot{\alpha}_{#1}}
\newcommand{\bid}[1]{\dot{\beta}_{#1}}
\newcommand{\thrusts}{\vect{u}} 
\newcommand{\thrustsDer}{\dot{\vect{u}}} 
\newcommand{\orientP}{\vect{\Omega}}  
\newcommand{\orientPDer}{\dot{\vect{\Omega}}}  
\newcommand{\thrustsD}{\dot{\vect{u}}} 
\newcommand{\orientPD}{\dot{\vect{\Omega}}}  
\newcommand{\wrench}{\vect{w}} %
\newcommand{\wrenchD}{\dot{\vect{w}}}
\newcommand{\thrustSet}{\mathcal{U}}
\newcommand{\thrustSetDer}{\dot{\mathcal{U}}}
\newcommand{\orientSet}{\mathcal{O}}
\newcommand{\orientSetDer}{\dot{\mathcal{O}}}
\newcommand{\cntrlDer}{\dot{\mathcal{C}}}
\newcommand{\cntrlSet}{\mathcal{C}}
\newcommand{\wrenchSet}{\mathcal{W}}
\newcommand{\vSet}{\mathcal{V}}
\newcommand{\momentSet}{\mathcal{M}}
\newcommand{\momentSetLocal}{\mathcal{M}_0}
\newcommand{\forceSet}{\mathcal{F}}
\newcommand{\pBd}{\pos_{B}^d}
\newcommand{\rotMatB}{\rotMat_{B}}			
\newcommand{\totalMoment}{\mv}
\newcommand{\totalWrench}{\wv}
\newcommand{\allocationMatrix}{\Fm}
	\def\ps@titlepagestyle{
		\def\@oddfoot{}\def\@evenfoot{}
		\def\@oddhead{\textcolor{red}{\sf\footnotesize Preprint version \hfill}}
		\def\@evenhead{\textcolor{red}{\sf\footnotesize  Preprint version \hfill}}%
	}%
	\def\ps@headings{
		\def\@oddfoot{\textcolor{red}{\sf\footnotesize  Preprint version \hfill}}\def\@evenfoot{\hfill\thepage\hfill}
		\def\@oddhead{}\def\@evenhead{}%
	}%
\begin{document}

\maketitle    

\begin{abstract}
{\it 
This paper presents a theoretical study on the ability of multi-rotor aerial vehicles (MRAVs) with tiltable propellers to achieve and sustain static hovering at different orientations.
To analyze the ability of MRAVs with tiltable propellers to achieve static hovering, a novel linear map between the platform's control inputs and applied forces and moments is introduced. The relation between the introduced map and the platform's ability to hover at different orientations is developed. Correspondingly, the conditions for MRAVs with tiltable propellers to realize and sustain static hovering are detailed.
A numerical metric is then introduced, which reflects the ability of MRAVs to sustain static hovering at different orientations.
A subclass of MRAVs with tiltable propellers is defined as the Critically Statically Hoverable platforms (CSH), where CSH platforms 
are MRAVs that cannot sustain static hovering with fixed propellers, but can achieve static hovering with tilting propellers.
Finally, extensive simulations are conducted to test and validate the above findings, and to demonstrate the effect of the proposed numerical metric on the platform's dynamics.}
\end{abstract}



\maketitle

\section{Introduction}

Multi-Rotor Aerial Vehicles (MRAVs) have been widely studied in the literature in the last decade. These platforms have seen an increase in their actuation abilities from the first presented designs \cite{pounds2002design} demonstrating the ability to achieve static hovering, to modern platforms able to apply independent forces and moments \cite{rashad2020ras} in all directions, or achieve omnidirectinal flight \cite{brescianini2016design,hamandi2020icuas}.
This increase in abilities allowed new applications to emerge that were previously not possible with fixed wing Unmanned Aerial Vehicles (UAV)s \cite{ducard2021review}, most notably, Aerial Physical Interaction \cite{ollero2022tro,mohiuddin2020energy} and Human Robot Interaction~\cite{allenspach2022ral}.

The majority of MRAV solutions rely on designs with fixed propellers (\ie the direction of thrust generated by each propeller is constant with respect to the body frame). Platforms with fixed propellers require less moving parts, and are thus easier to design, build \rev{and maintain}. However, and as shown in \cite{hamandi2021ijrr}, platforms with tiltable propellers could achieve desired actuation properties with fewer propellers. Therefore, they provide an interesting solution to many UAV applications \cite{robotics7040065}. In addition, since these platforms can achieve desired actuation properties with fewer propellers, they could withstand propeller failures, and maintain hover after the failure of one or more propellers.

In this perspective, there has been multiple prototype platforms in the literature that demonstrate the actuation abilities of MRAVs with tiltable propellers, with two \cite{cardoso2016robust}, three \cite{papachristos2016dual}, four \cite{alali2020jmes} and more propellers \cite{kamel2018voliro}. However, it is noted that each presented prototype proposes different strategies for actuation allocation, \ie  for mapping and allocating the platform's controls to achieve the desired forces and moments. 
For example, the authors in \cite{gress2002using} study the effect of changes in propellers' \rev{orientations} and \rev{thrusts}
on each force or moment axis, while operating near hover. 
While such a method allowed earlier platforms with actuated propellers to achieve static hovering and follow desired trajectories, they do not easily generalize to different platforms.
On the other hand, the authors in \cite{ramp2015modeling} propose a direct allocation method, where a linear map is proposed between the platform's wrench (\ie forces and moments) and the three-dimensional thrust generated by each propeller.
The authors in \cite{ryll2015novel} propose to control platforms with tiltable propellers at a higher differential level, where they propose a linear map between the wrench derivative and the derivative of the propellers' thrusts and orientations. It is noted that the linear map in \cite{ryll2015novel} is variable, as it depends on the current control inputs, while the one from \cite{ramp2015modeling} is fixed. On the other hand, the map from \cite{ryll2015novel} can be derived for any MRAV with tiltable propellers, while the one from~\cite{ramp2015modeling} can only encompass platforms with propellers tilting in $\nR{3}$.

\rev{Multiple} papers \rev{from the literature} analyzed thoroughly the ability of platforms with fixed propellers to hover at different orientations~\cite{michieletto2017}, achieve omnidirectional flight\cite{hamandi2021ijrr}, \rev{and hover following the failure of some of their propellers \cite{baskaya2021ral,remma2018cdc,shunsuke2021tro}.}
\rev{The authors in \cite{doi:10.1177/0278364920943654} analyze the hovering efficiency of different platforms, and show the ability of their proposed tiltrotor platform to achieve omnidirectional flight, with a higher efficiency than other fixed omnidirectional platforms from the literature.}
\rev{While the advantage of MRAVs with tilting propellers is clear from the corresponding designs in the literature (example \cite{soto2018hexapodopter} and \cite{mcarthur2018design}), the analysis of the ability of generic MRAV with tiltable propellers to achieve static hovering, and sustain static hovering is still missing in the literature.}
The work in \cite{hamandi2021ijrr} and \cite{hamandi2021airpharo} discuss such abilities for both fixed and actuated propellers\rev{; however, }their formulation is generic to both types of platforms, and as such, does not cover comprehensively the effect of propeller tilting on the platform's ability to hover.

This paper addresses this gap, and presents a comprehensive formulation that reflects the static hovering ability of any MRAV with tiltable propellers at different orientations. Then at each possible orientation, we analyze the stability of the corresponding hover, \ie  the ability to sustain achieved hovering in the presence of external disturbances. The main novel contributions of this paper are as follows:
\begin{enumerate}
\item Introduce a generic linear map between the wrench and controls for any tiltable MRAV that reflects all hovering orientations.
\item Analyze the ability of tiltable MRAVs to achieve static hovering at different orientations.
\item Analyze the ability of  tiltable MRAVs to sustain static hovering at different orientations, \rev{\ie the ability of tiltable MRAV to sustain static hovering while tilting between different propeller configurations.} Introduce a numerical metric that reflects the \rev{corresponding} ability.
\item Introduce a subclass of MRAVs with tiltable propellers
that cannot sustain static hovering with fixed propellers, but achieve static hovering with tilting propellers. We refer to platforms in this class as Critically Statically Hoverable (CSH) platforms.
\item Demonstrate the static hovering ability and fail-safe robustness of the proposed CSH platforms in extensive simulation.
\item Demonstrate \rev{in simulation} the effect of the proposed numerical metric on the platform's dynamic response at different hovering orientations.
\end{enumerate}

The rest of this paper is organized as follows: Section \ref{sec:modeling} summarizes the required definitions for the discussion of this paper, and introduces the allocation map for generic MRAVs with tiltable propellers. Section \ref{sec:static_hovering} discusses static hovering of generic MRAVs and the corresponding actuation properties. Section \ref{sec:csh} introduces Critically Statically Hoverable platforms, and demonstrates with a few examples how such platforms can achieve different actuation properties. Section \ref{sec:case_study} tests and validates the theoretical findings with extensive simulations, and finally, Section \ref{sec:conc} concludes the paper.


\begin{figure}
\centering

\tikzset{every picture/.style={line width=0.75pt}} 

\begin{tikzpicture}[x=0.75pt,y=0.75pt,yscale=-1,xscale=1]

\draw [color={rgb, 255:red, 0; green, 0; blue, 0 }  ,draw opacity=1 ] [dash pattern={on 3.75pt off 3pt on 7.5pt off 1.5pt}]  (214.87,287.88) -- (214.32,218.59) ;
\draw [line width=1.5]    (421.52,169.65) -- (499,135.01) ;
\draw [color={rgb, 255:red, 181; green, 0; blue, 0 }  ,draw opacity=1 ][line width=1.5]    (421.52,169.65) -- (493.43,137.55) ;
\draw [shift={(497.08,135.92)}, rotate = 155.94] [fill={rgb, 255:red, 181; green, 0; blue, 0 }  ,fill opacity=1 ][line width=0.08]  [draw opacity=0] (13.4,-6.43) -- (0,0) -- (13.4,6.44) -- (8.9,0) -- cycle    ;
\draw [color={rgb, 255:red, 207; green, 129; blue, 10 }  ,draw opacity=1 ][line width=1.5]    (242.51,129.92) -- (242.26,95.37) ;
\draw [shift={(242.23,91.37)}, rotate = 89.59] [fill={rgb, 255:red, 207; green, 129; blue, 10 }  ,fill opacity=1 ][line width=0.08]  [draw opacity=0] (13.4,-6.43) -- (0,0) -- (13.4,6.44) -- (8.9,0) -- cycle    ;
\draw  [dash pattern={on 3.75pt off 3pt on 7.5pt off 1.5pt}]  (191.68,128.07) -- (154.2,69.8) ;
\draw  [dash pattern={on 3.75pt off 3pt on 7.5pt off 1.5pt}]  (191.68,128.07) -- (191.4,70.2) ;
\draw [line width=1.5]  [dash pattern={on 2.25pt off 0.75pt on 0.75pt off 0.75pt}]  (231.94,287.22) .. controls (242.28,252.91) and (205.35,277.36) .. (194.65,295.78) ;
\draw [shift={(192.94,299.22)}, rotate = 292.07] [fill={rgb, 255:red, 0; green, 0; blue, 0 }  ][line width=0.08]  [draw opacity=0] (13.4,-6.43) -- (0,0) -- (13.4,6.44) -- (8.9,0) -- cycle    ;
\draw  [fill={rgb, 255:red, 5; green, 68; blue, 108 }  ,fill opacity=1 ] (212.68,294.46) .. controls (212.23,293.87) and (212.34,293.02) .. (212.94,292.57) -- (222.14,285.57) .. controls (222.73,285.12) and (223.57,285.23) .. (224.02,285.82) -- (226.47,289.04) .. controls (226.92,289.63) and (226.81,290.47) .. (226.22,290.92) -- (217.02,297.93) .. controls (216.42,298.38) and (215.58,298.27) .. (215.13,297.67) -- cycle ;
\draw  [fill={rgb, 255:red, 227; green, 224; blue, 224 }  ,fill opacity=1 ] (230.4,272.3) .. controls (248.53,258.9) and (249.19,276.93) .. (230.16,284.29) .. controls (211.12,291.66) and (219.61,284.77) .. (201.55,296.6) .. controls (183.49,308.43) and (186.93,316.48) .. (205,304.65) .. controls (223.06,292.82) and (212.27,285.71) .. (230.4,272.3) -- cycle ;

\draw  [dash pattern={on 4.5pt off 4.5pt}]  (408.4,169.76) -- (501.13,270.52) ;
\draw [line width=1.5]    (500.22,123.3) .. controls (471.78,105.91) and (501.89,111.5) .. (520.55,121.57) ;
\draw [shift={(523.97,123.55)}, rotate = 212.01] [fill={rgb, 255:red, 0; green, 0; blue, 0 }  ][line width=0.08]  [draw opacity=0] (13.4,-6.43) -- (0,0) -- (13.4,6.44) -- (8.9,0) -- cycle    ;
\draw  [fill={rgb, 255:red, 67; green, 173; blue, 40 }  ,fill opacity=1 ] (395.33,169.76) .. controls (395.33,162.99) and (401.19,157.51) .. (408.4,157.51) .. controls (415.62,157.51) and (421.47,162.99) .. (421.47,169.76) .. controls (421.47,176.52) and (415.62,182) .. (408.4,182) .. controls (401.19,182) and (395.33,176.52) .. (395.33,169.76) -- cycle ;
\draw [line width=1.5]    (325.46,167.66) -- (395.33,169.76) ;
\draw  [fill={rgb, 255:red, 5; green, 68; blue, 108 }  ,fill opacity=1 ] (317.85,169.46) .. controls (317.4,168.87) and (317.51,168.02) .. (318.1,167.57) -- (327.3,160.57) .. controls (327.89,160.12) and (328.74,160.23) .. (329.19,160.82) -- (331.64,164.04) .. controls (332.09,164.63) and (331.97,165.47) .. (331.38,165.92) -- (322.18,172.93) .. controls (321.59,173.38) and (320.75,173.27) .. (320.29,172.67) -- cycle ;
\draw  [fill={rgb, 255:red, 227; green, 224; blue, 224 }  ,fill opacity=1 ] (335.56,147.3) .. controls (353.69,133.9) and (354.35,151.93) .. (335.32,159.29) .. controls (316.29,166.66) and (324.78,159.77) .. (306.72,171.6) .. controls (288.66,183.43) and (292.1,191.48) .. (310.16,179.65) .. controls (328.22,167.82) and (317.43,160.71) .. (335.56,147.3) -- cycle ;

\draw [line width=1.5]  [dash pattern={on 2.25pt off 0.75pt on 0.75pt off 0.75pt}]  (324.66,155.9) .. controls (337.26,135.62) and (311.77,149.57) .. (300.49,163.31) ;
\draw [shift={(298.21,166.39)}, rotate = 303.19] [fill={rgb, 255:red, 0; green, 0; blue, 0 }  ][line width=0.08]  [draw opacity=0] (13.4,-6.43) -- (0,0) -- (13.4,6.44) -- (8.9,0) -- cycle    ;
\draw [color={rgb, 255:red, 47; green, 126; blue, 167 }  ,draw opacity=1 ][line width=1.5]    (322.82,163.15) -- (300.54,129.77) ;
\draw [shift={(298.32,126.45)}, rotate = 56.26] [fill={rgb, 255:red, 47; green, 126; blue, 167 }  ,fill opacity=1 ][line width=0.08]  [draw opacity=0] (11.61,-5.58) -- (0,0) -- (11.61,5.58) -- cycle    ;

\draw [line width=1.5]    (380.45,224.57) -- (406.98,181.23) ;
\draw  [fill={rgb, 255:red, 5; green, 68; blue, 108 }  ,fill opacity=1 ] (374.86,229.44) .. controls (374.24,229.04) and (374.07,228.2) .. (374.47,227.58) -- (380.78,217.89) .. controls (381.19,217.27) and (382.02,217.09) .. (382.65,217.5) -- (386.03,219.7) .. controls (386.66,220.11) and (386.83,220.94) .. (386.43,221.57) -- (380.11,231.26) .. controls (379.71,231.88) and (378.87,232.06) .. (378.25,231.65) -- cycle ;
\draw  [fill={rgb, 255:red, 227; green, 224; blue, 224 }  ,fill opacity=1 ] (384.11,202.63) .. controls (396.69,183.91) and (403.36,200.67) .. (387.91,214) .. controls (372.46,227.33) and (378.14,217.99) .. (365.1,235.2) .. controls (352.06,252.41) and (358.01,258.83) .. (371.05,241.62) .. controls (384.09,224.42) and (371.53,221.34) .. (384.11,202.63) -- cycle ;

\draw [line width=1.5]    (371.24,223.84) .. controls (348.24,247.96) and (360.06,219.72) .. (373.85,203.6) ;
\draw [shift={(376.5,200.68)}, rotate = 134.2] [fill={rgb, 255:red, 0; green, 0; blue, 0 }  ][line width=0.08]  [draw opacity=0] (13.4,-6.43) -- (0,0) -- (13.4,6.44) -- (8.9,0) -- cycle    ;
\draw [color={rgb, 255:red, 47; green, 126; blue, 167 }  ,draw opacity=1 ][line width=1.5]    (377.9,222.43) -- (338.03,206.47) ;
\draw [shift={(334.32,204.98)}, rotate = 21.82] [fill={rgb, 255:red, 47; green, 126; blue, 167 }  ,fill opacity=1 ][line width=0.08]  [draw opacity=0] (11.61,-5.58) -- (0,0) -- (11.61,5.58) -- cycle    ;

\draw  [dash pattern={on 0.84pt off 2.51pt}]  (359.51,206.15) .. controls (349.91,205.36) and (331.91,185.11) .. (332.31,167.75) ;
\draw  [fill={rgb, 255:red, 5; green, 68; blue, 108 }  ,fill opacity=1 ] (495.28,128.18) .. controls (495.6,127.51) and (496.4,127.22) .. (497.07,127.54) -- (507.52,132.48) .. controls (508.2,132.8) and (508.48,133.6) .. (508.17,134.28) -- (506.44,137.93) .. controls (506.12,138.6) and (505.32,138.89) .. (504.64,138.57) -- (494.19,133.62) .. controls (493.52,133.31) and (493.23,132.5) .. (493.55,131.83) -- cycle ;
\draw  [fill={rgb, 255:red, 227; green, 224; blue, 224 }  ,fill opacity=1 ] (523.1,133.72) .. controls (543.34,143.65) and (527.64,152.53) .. (512.34,139.02) .. controls (497.05,125.51) and (507.07,129.88) .. (488.26,119.29) .. controls (469.44,108.69) and (463.88,115.45) .. (482.69,126.04) .. controls (501.51,136.64) and (502.86,123.79) .. (523.1,133.72) -- cycle ;

\draw  [dash pattern={on 0.84pt off 2.51pt}]  (339.06,143.35) .. controls (360.26,100.55) and (435.46,106.95) .. (471.06,122.95) ;
\draw  [dash pattern={on 0.84pt off 2.51pt}]  (376.31,214.35) .. controls (420.91,231.11) and (518.66,192.86) .. (502.41,147.36) ;
\draw [color={rgb, 255:red, 47; green, 126; blue, 167 }  ,draw opacity=1 ][line width=1.5]    (503,129.51) -- (510.19,87.17) ;
\draw [shift={(510.86,83.23)}, rotate = 99.63] [fill={rgb, 255:red, 47; green, 126; blue, 167 }  ,fill opacity=1 ][line width=0.08]  [draw opacity=0] (11.61,-5.58) -- (0,0) -- (11.61,5.58) -- cycle    ;
\draw  [fill={rgb, 255:red, 255; green, 255; blue, 255 }  ,fill opacity=1 ] (400.66,169.49) .. controls (400.66,165.94) and (403.99,163.05) .. (408.11,163.05) .. controls (412.22,163.05) and (415.56,165.94) .. (415.56,169.49) .. controls (415.56,173.05) and (412.22,175.93) .. (408.11,175.93) .. controls (403.99,175.93) and (400.66,173.05) .. (400.66,169.49) -- cycle ; \draw   (400.66,169.49) -- (415.56,169.49) ; \draw   (408.11,163.05) -- (408.11,175.93) ;
\draw  [draw opacity=0][fill={rgb, 255:red, 0; green, 0; blue, 0 }  ,fill opacity=1 ] (407.97,163.19) .. controls (407.99,163.19) and (408,163.19) .. (408.01,163.19) .. controls (412.09,163.19) and (415.4,165.99) .. (415.42,169.46) -- (408.01,169.49) -- cycle ; \draw   (407.97,163.19) .. controls (407.99,163.19) and (408,163.19) .. (408.01,163.19) .. controls (412.09,163.19) and (415.4,165.99) .. (415.42,169.46) ;  
\draw  [draw opacity=0][fill={rgb, 255:red, 0; green, 0; blue, 0 }  ,fill opacity=1 ] (408.04,175.83) .. controls (408.03,175.83) and (408.02,175.83) .. (408,175.83) .. controls (403.93,175.79) and (400.65,172.96) .. (400.66,169.49) -- (408.07,169.53) -- cycle ; \draw   (408.04,175.83) .. controls (408.03,175.83) and (408.02,175.83) .. (408,175.83) .. controls (403.93,175.79) and (400.65,172.96) .. (400.66,169.49) ;  

\draw [color={rgb, 255:red, 207; green, 129; blue, 10 }  ,draw opacity=1 ][line width=1.5]    (408.5,169.29) -- (394.59,135.86) ;
\draw [shift={(393.05,132.16)}, rotate = 67.4] [fill={rgb, 255:red, 207; green, 129; blue, 10 }  ,fill opacity=1 ][line width=0.08]  [draw opacity=0] (13.4,-6.43) -- (0,0) -- (13.4,6.44) -- (8.9,0) -- cycle    ;
\draw [color={rgb, 255:red, 13; green, 154; blue, 144}  ,draw opacity=1 ][line width=1.5]    (421.52,169.65) -- (456.66,153.86) ;
\draw [shift={(460.31,152.23)}, rotate = 155.81] [fill={rgb, 255:red, 13; green, 154; blue, 144}  ,fill opacity=1 ][line width=0.08]  [draw opacity=0] (13.4,-6.43) -- (0,0) -- (13.4,6.44) -- (8.9,0) -- cycle    ;
\draw [color={rgb, 255:red, 9; green, 82; blue, 163 }  ,draw opacity=1 ][line width=1.5]    (416.4,180.26) -- (428.69,203.73) ;
\draw [shift={(430.55,207.27)}, rotate = 242.36] [fill={rgb, 255:red, 9; green, 82; blue, 163 }  ,fill opacity=1 ][line width=0.08]  [draw opacity=0] (13.4,-6.43) -- (0,0) -- (13.4,6.44) -- (8.9,0) -- cycle    ;
\draw [color={rgb, 255:red, 0; green, 0; blue, 0 }  ,draw opacity=1 ][line width=1.5]    (501.13,270.52) -- (500.94,243.35) ;
\draw [shift={(500.92,239.35)}, rotate = 89.6] [fill={rgb, 255:red, 0; green, 0; blue, 0 }  ,fill opacity=1 ][line width=0.08]  [draw opacity=0] (13.4,-6.43) -- (0,0) -- (13.4,6.44) -- (8.9,0) -- cycle    ;
\draw [color={rgb, 255:red, 0; green, 0; blue, 0 }  ,draw opacity=1 ][line width=1.5]    (500.13,271.02) -- (520.92,271.3) ;
\draw [shift={(524.92,271.35)}, rotate = 180.76] [fill={rgb, 255:red, 0; green, 0; blue, 0 }  ,fill opacity=1 ][line width=0.08]  [draw opacity=0] (13.4,-6.43) -- (0,0) -- (13.4,6.44) -- (8.9,0) -- cycle    ;
\draw [color={rgb, 255:red, 0; green, 0; blue, 0 }  ,draw opacity=1 ][line width=1.5]    (501.13,270.52) -- (490.02,288.45) ;
\draw [shift={(487.92,291.85)}, rotate = 301.78] [fill={rgb, 255:red, 0; green, 0; blue, 0 }  ,fill opacity=1 ][line width=0.08]  [draw opacity=0] (13.4,-6.43) -- (0,0) -- (13.4,6.44) -- (8.9,0) -- cycle    ;
\draw  [dash pattern={on 3.75pt off 3pt on 7.5pt off 1.5pt}]  (215.16,286.12) -- (181.33,237.67) ;
\draw [color={rgb, 255:red, 74; green, 74; blue, 74 }  ,draw opacity=1 ][line width=1.5]    (214.63,284.69) -- (191.73,252.28) ;
\draw [shift={(189.42,249.02)}, rotate = 54.74] [fill={rgb, 255:red, 74; green, 74; blue, 74 }  ,fill opacity=1 ][line width=0.08]  [draw opacity=0] (13.4,-6.43) -- (0,0) -- (13.4,6.44) -- (8.9,0) -- cycle    ;
\draw [color={rgb, 255:red, 207; green, 129; blue, 10 }  ,draw opacity=1 ][line width=1.5]    (214.63,284.69) -- (214.4,244.03) ;
\draw [shift={(214.38,240.03)}, rotate = 89.68] [fill={rgb, 255:red, 207; green, 129; blue, 10 }  ,fill opacity=1 ][line width=0.08]  [draw opacity=0] (13.4,-6.43) -- (0,0) -- (13.4,6.44) -- (8.9,0) -- cycle    ;
\draw [color={rgb, 255:red, 47; green, 126; blue, 167 }  ,draw opacity=1 ][line width=1.5]    (216.1,288.05) -- (201.98,266.65) ;
\draw [shift={(199.78,263.32)}, rotate = 56.59] [fill={rgb, 255:red, 47; green, 126; blue, 167 }  ,fill opacity=1 ][line width=0.08]  [draw opacity=0] (11.61,-5.58) -- (0,0) -- (11.61,5.58) -- cycle    ;
\draw [color={rgb, 255:red, 245; green, 166; blue, 35 }  ,draw opacity=1 ]   (184.92,241.77) .. controls (185.93,232.79) and (190.23,228.88) .. (199.52,229.94) .. controls (206.91,230.78) and (208.19,232.12) .. (210.93,233.92) ;
\draw [shift={(213.47,235.4)}, rotate = 206.95] [fill={rgb, 255:red, 245; green, 166; blue, 35 }  ,fill opacity=1 ][line width=0.08]  [draw opacity=0] (10.72,-5.15) -- (0,0) -- (10.72,5.15) -- (7.12,0) -- cycle    ;
\draw  [color={rgb, 255:red, 186; green, 0; blue, 0 }  ,draw opacity=1 ] (208.31,287.41) .. controls (208.31,283.56) and (211.43,280.44) .. (215.28,280.44) .. controls (219.13,280.44) and (222.26,283.56) .. (222.26,287.41) .. controls (222.26,291.26) and (219.13,294.38) .. (215.28,294.38) .. controls (211.43,294.38) and (208.31,291.26) .. (208.31,287.41) -- cycle ;
\draw  [color={rgb, 255:red, 186; green, 0; blue, 0 }  ,draw opacity=1 ][fill={rgb, 255:red, 186; green, 0; blue, 0 }  ,fill opacity=1 ] (213.06,287.41) .. controls (213.06,286.18) and (214.05,285.19) .. (215.28,285.19) .. controls (216.51,285.19) and (217.51,286.18) .. (217.51,287.41) .. controls (217.51,288.64) and (216.51,289.64) .. (215.28,289.64) .. controls (214.05,289.64) and (213.06,288.64) .. (213.06,287.41) -- cycle ;
\draw  [fill={rgb, 255:red, 255; green, 255; blue, 255 }  ,fill opacity=1 ] (235.06,129.92) .. controls (235.06,126.36) and (238.4,123.48) .. (242.51,123.48) .. controls (246.63,123.48) and (249.96,126.36) .. (249.96,129.92) .. controls (249.96,133.48) and (246.63,136.36) .. (242.51,136.36) .. controls (238.4,136.36) and (235.06,133.48) .. (235.06,129.92) -- cycle ; \draw   (235.06,129.92) -- (249.96,129.92) ; \draw   (242.51,123.48) -- (242.51,136.36) ;
\draw  [draw opacity=0][fill={rgb, 255:red, 0; green, 0; blue, 0 }  ,fill opacity=1 ] (242.38,123.62) .. controls (242.39,123.62) and (242.4,123.62) .. (242.42,123.62) .. controls (246.49,123.62) and (249.8,126.42) .. (249.83,129.89) -- (242.42,129.92) -- cycle ; \draw   (242.38,123.62) .. controls (242.39,123.62) and (242.4,123.62) .. (242.42,123.62) .. controls (246.49,123.62) and (249.8,126.42) .. (249.83,129.89) ;  
\draw  [draw opacity=0][fill={rgb, 255:red, 0; green, 0; blue, 0 }  ,fill opacity=1 ] (242.45,136.26) .. controls (242.44,136.26) and (242.42,136.26) .. (242.41,136.26) .. controls (238.33,136.22) and (235.05,133.39) .. (235.06,129.92) -- (242.47,129.96) -- cycle ; \draw   (242.45,136.26) .. controls (242.44,136.26) and (242.42,136.26) .. (242.41,136.26) .. controls (238.33,136.22) and (235.05,133.39) .. (235.06,129.92) ;  

\draw [line width=1.5]    (195.37,130.43) -- (242.51,129.92) ;
\draw  [fill={rgb, 255:red, 5; green, 68; blue, 108 }  ,fill opacity=1 ] (184.61,131.17) .. controls (184.16,130.58) and (184.27,129.74) .. (184.86,129.29) -- (194.06,122.28) .. controls (194.66,121.83) and (195.5,121.94) .. (195.95,122.54) -- (198.4,125.75) .. controls (198.85,126.34) and (198.74,127.19) .. (198.14,127.64) -- (188.94,134.64) .. controls (188.35,135.09) and (187.51,134.98) .. (187.06,134.39) -- cycle ;
\draw  [fill={rgb, 255:red, 227; green, 224; blue, 224 }  ,fill opacity=1 ] (202.32,109.02) .. controls (220.45,95.61) and (221.12,113.64) .. (202.08,121.01) .. controls (183.05,128.37) and (191.54,121.49) .. (173.48,133.32) .. controls (155.42,145.15) and (158.86,153.19) .. (176.92,141.36) .. controls (194.98,129.53) and (184.19,122.42) .. (202.32,109.02) -- cycle ;

\draw [line width=1.5]  [dash pattern={on 2.25pt off 0.75pt on 0.75pt off 0.75pt}]  (191.42,117.62) .. controls (204.03,97.33) and (178.53,111.28) .. (167.25,125.02) ;
\draw [shift={(164.97,128.1)}, rotate = 303.19] [fill={rgb, 255:red, 0; green, 0; blue, 0 }  ][line width=0.08]  [draw opacity=0] (13.4,-6.43) -- (0,0) -- (13.4,6.44) -- (8.9,0) -- cycle    ;
\draw [color={rgb, 255:red, 47; green, 126; blue, 167 }  ,draw opacity=1 ][line width=1.5]    (189.59,124.86) -- (167.3,91.49) ;
\draw [shift={(165.08,88.16)}, rotate = 56.26] [fill={rgb, 255:red, 47; green, 126; blue, 167 }  ,fill opacity=1 ][line width=0.08]  [draw opacity=0] (11.61,-5.58) -- (0,0) -- (11.61,5.58) -- cycle    ;

\draw [color={rgb, 255:red, 245; green, 166; blue, 35 }  ,draw opacity=1 ]   (163.26,82.55) .. controls (162.47,70.32) and (163.97,66.07) .. (170.97,65.07) .. controls (177.17,64.19) and (182.19,66.63) .. (187.92,71.14) ;
\draw [shift={(190.2,73)}, rotate = 220.16] [fill={rgb, 255:red, 245; green, 166; blue, 35 }  ,fill opacity=1 ][line width=0.08]  [draw opacity=0] (10.72,-5.15) -- (0,0) -- (10.72,5.15) -- (7.12,0) -- cycle    ;

\draw (175.9,209.56) node [anchor=north west][inner sep=0.75pt]  [color={rgb, 255:red, 245; green, 166; blue, 35 }  ,opacity=1 ]  {$\ai{i}$};
\draw (173.67,173.47) node [anchor=north west][inner sep=0.75pt]   [align=left] {a) Front view};
\draw (181.27,326.17) node [anchor=north west][inner sep=0.75pt]   [align=left] {b) Side view};
\draw (357.27,326.17) node [anchor=north west][inner sep=0.75pt]   [align=left] {c) Platform schematic};
\draw (225.01,301.24) node [anchor=north west][inner sep=0.75pt]  [color={rgb, 255:red, 186; green, 0; blue, 0 }  ,opacity=1 ]  {$\pPB{i}$};
\draw (221.51,230.09) node [anchor=north west][inner sep=0.75pt]  [color={rgb, 255:red, 207; green, 129; blue, 10 }  ,opacity=1 ]  {$z_{B}$};
\draw (165.51,243.09) node [anchor=north west][inner sep=0.75pt]    {$z_{p_{i}}$};
\draw (176.21,265.17) node [anchor=north west][inner sep=0.75pt]  [color={rgb, 255:red, 46; green, 127; blue, 167 }  ,opacity=1 ]  {$v_{i}$};
\draw (155.38,43.78) node [anchor=north west][inner sep=0.75pt]  [color={rgb, 255:red, 245; green, 166; blue, 35 }  ,opacity=1 ]  {$\betai{i}$};
\draw (233.78,67.92) node [anchor=north west][inner sep=0.75pt]  [color={rgb, 255:red, 207; green, 129; blue, 10 }  ,opacity=1 ]  {$z_{B}$};
\draw (410.17,139.25) node [anchor=north west][inner sep=0.75pt]    {$O_{B}$};
\draw (462.31,155.63) node [anchor=north west][inner sep=0.75pt]  [color={rgb, 255:red, 13; green, 154; blue, 144}  ,opacity=1 ]  {$x_{B}$};
\draw (409.51,216.42) node [anchor=north west][inner sep=0.75pt]  [color={rgb, 255:red, 9; green, 82; blue, 183 }  ,opacity=1 ]  {$y_{B}$};
\draw (421.17,114.47) node [anchor=north west][inner sep=0.75pt]    {$\mathcal{B}$};
\draw (373.37,114.49) node [anchor=north west][inner sep=0.75pt]  [color={rgb, 255:red, 207; green, 129; blue, 10 }  ,opacity=1 ]  {$z_{B}$};
\draw (455.85,124.4) node [anchor=north west][inner sep=0.75pt]  [color={rgb, 255:red, 181; green, 0; blue, 0}  ,opacity=1 ]  {$\pPB{1}$};
\draw (518.38,69.83) node [anchor=north west][inner sep=0.75pt]  [color={rgb, 255:red, 46; green, 127; blue, 167 }  ,opacity=1 ]  {$v_{1}$};
\draw (313.88,113.67) node [anchor=north west][inner sep=0.75pt]  [color={rgb, 255:red, 46; green, 127; blue, 167 }  ,opacity=1 ]  {$v_{2}$};
\draw (329.88,213.83) node [anchor=north west][inner sep=0.75pt]  [color={rgb, 255:red, 46; green, 127; blue, 167 }  ,opacity=1 ]  {$v_{n}$};
\draw (467.01,255.42) node [anchor=north west][inner sep=0.75pt]    {$O_{W}$};
\draw (491.51,221.42) node [anchor=north west][inner sep=0.75pt]    {$z_{W}$};
\draw (527.51,260.92) node [anchor=north west][inner sep=0.75pt]    {$x_{W}$};
\draw (469.51,293.25) node [anchor=north west][inner sep=0.75pt]    {$y_{W}$};
\draw (505.84,283.25) node [anchor=north west][inner sep=0.75pt]    {$\mathcal{W}$};

\end{tikzpicture}

\caption{This figure summarizes the main mathematical notations used in this paper.}
\label{fig:definitions}
\end{figure}
\FloatBarrier

\section{Modeling of Generic Tiltable MRAVs}\label{sec:modeling}

Let us first introduce a generic tiltable multi-rotor aerial design, depicted in figure~\ref{fig:definitions}\rev{c} as follows:

\begin{dfn}
A tiltable multirotor design is a rigid body, made from a fixed frame, actuated by a group of actuation units, each producing thrust, with the direction of at least one of its actuation units being controllable.
\end{dfn} 
To characterize such platforms, let us first denote by $\frameB$ as the body fixed frame, with origin $\originB$ placed at the platform's Center of Mass (CoM), with axes \{$\xB$, $\yB$, $\zB$\}. $\originB$ is considered to be floating in a fixed world frame denoted by $\frameW$, with origin $\originW$ and axes \{$\xW,$ $\yW$, $\zW$\}.\\
Then to characterize the actuation ability of each platform, let us first denote by $N \in \nN{}_{>0}$ as the number of actuation units. 
Each of these actuation units is attached to the platform at a fixed position with respect to (\wrt) $\frameB$ which we denote by ${\pPB{i} \in \nR{3}}$. 
In addition, each of these actuation units is expected to produce a thrust along its vertical axis; we denote by ${\vP{i} \in \nR{3}}$ as the corresponding thrust \wrt $\frameB$, with its norm $\uP{i} = \norm{\vP{i}} \in \nR{}_{>0}$, and its orientation $\vO{i} = \frac{\vP{i}}{\uP{i}}$, where we assumed all propellers to be uni-directional thrust propellers.\\
For convenience, let us define the orientation $\vO{i}$ with respect to the following angles depicted in figure~\ref{fig:definitions}:
\begin{enumerate}
\item radial rotation angle $\ai{i}$ \rev{(\fig\ref{fig:definitions}b)}: denotes the angle needed for a rotation about $\pPB{i}$ to bring $\zB$ into a vector $\zB'$ , where $\zB'$ is the projection of $\zB$
into the plane spanned by $\pPB{i}$ and $\vO{i}$;
\item tangential rotation angle $\betai{i}$\rev{(\fig\ref{fig:definitions}a)}: denotes the angle needed for a rotation about the axis perpendicular to the plane spanned by $\pPB{i}$ and $\vO{i}$ to let $\zB'$
coincide with $\vO{i}$.
\end{enumerate}
It is easy to see that each of these angles can be explicitly controlled by a servomotor.
Correspondingly, the thrust orientation $\vO{i}$ can be computed following the above rotation angles as follows:
\begin{align}\label{eq:h_space_to_v_space}
\vO{i} = \rotMat_{z}(\gamma_i)\rotMat_{y}(\betai{i}) \rotMat_{x}(\ai{i}) \vect{e}_3 = \rotMat_{z}(\gamma_i)\begin{bmatrix}
-s\betai{i} \\ s\ai{i} \; c\betai{i} \\ c\ai{i} \; c\betai{i}
\end{bmatrix}
\end{align} 
where $\rotMat_{z}$, $\rotMat_{y}$ and $\rotMat_{x}$ are the classical rotation matrices about the $z$, $y$ and $x$ axes, $\vect{e}_3$ is a unit vector such that $\vect{e}_3= [0 \; 0 \; 1]^T$,  $\gamma_i$ is the angle between $\xB$ and the projection of $\pPB{i}$ onto the plane \{$\xB, \yB$\}, and where for any angle $\bullet$, $s\bullet = sin(\bullet)$ and $c\bullet = cos(\bullet)$.

For ease of notation, let us denote by $\VP = [\vP{1}^T \dots \vP{N}^T]^T \in \vSet \subset \nR{3N}$ as the concatenation matrix of all forces produced by the corresponding propellers in $\nR{3}$, and let us denote by $\thrusts = [\uP{1} \dots \uP{N}]^T \in \thrustSet \subset \nR{N}$ and ${\thrustsDer = [\uPDer{1} \dots \uPDer{N}]^T \in \thrustSetDer \subset \nR{N}}$ as the concatenation matrices of the propellers' thrusts and the corresponding derivatives. Similarly, let us denote by 
${\orientP = [\ai{1} \; \betai{1} \dots \ai{N}\; \betai{N}]^T \in \orientSet \subset \nR{2 N}}$ as the concatenation of the orientation angles of all actuation units, and by\\ 
${\orientPDer = [\aid{1} \; \bid{1} \dots \aid{N}\; \bid{N}]^T \in \orientSetDer \subset \nR{2 N}}$ as the corresponding derivatives.
Finally, let us denote by a control input \\${\vect{H} = [\thrusts^T \orientP^T]^T \in \cntrlSet} \rev{\subset \nR{N+2N}}$, where $\cntrlSet$ is referred to as the control set, and let us denote by $\cntrlDer \rev{\subset \nR{N+2N}}$ as the corresponding derivative. For convenience, let us denote by Degrees of Freedom (DoF) as the dimension of $\cntrlSet$, \ie the number of active control inputs.\\
While the above notation presents two sets for the platform's control inputs, it should be noted that there is \rev{a map between $\VP$ and $\vect{H}$ following \eqn\eqref{eq:h_space_to_v_space}. Note that the mapping $\vect{H} \rightarrow \VP$ is injective; however, it is not bijective due to the singularities in \eqn\eqref{eq:h_space_to_v_space}}. Note that the different notations will be used interchangeable throughout the paper.

Each of the above described actuation units is expected to produce a force and a moment about its vertical axis $\vO{i}$. The resultant wrench $\wrench \in \nR{6}$ applied at the CoM of the platform can be expressed as follows:
\begin{align}
\wrench(\thrusts,\orientP) = 
\begin{bmatrix}
\fB(\thrusts,\orientP)\\
\mB(\thrusts,\orientP)
\end{bmatrix} = 
\sum_{i=1}^N 
\begin{bmatrix}
\vP{i} \\
(\skew{\pPB{i}} + r_i ) \vP{i}
\end{bmatrix}
\end{align} 
where $\fB \in \nR{3}$ and $\mB \in \nR{3}$ are the total forces and moments applied on the CoM of the platform, $\skew{\cdot}$ denotes the skew-symmetric map $\nR{3} \rightarrow SO(3)$, and $r_i = \pm \frac{c_\tau}{c_T}$ is the ratio between the propeller's drag and thrust coefficients, respectively $c_\tau$ and $c_T$, with its sign being relative to the corresponding propeller rotation direction. \rev{In this paper, both lift and drag coefficients are assumed to be constant similar to \cite{pounds2002design}, where each propeller generates lift and drag along its axis of rotation $\vO{i}$ proportional to its rotational speed as follows:
\begin{align}
    \uP{i} = c_T \omega_i^2 \\
    \tau_i = C_\tau \omega_i^2
\end{align}
}

\subsection{Allocation Strategies}\label{subsec:allocation_strategy}
In this paper, we refer to an allocation strategy, or allocation map, as any mapping between the platform's control inputs and the applied wrench. Since we are introducing two control inputs, such mapping could be in one of the following forms, \rev{which will be discussed below}:
\begin{enumerate}
    \item $\vect{H} \rightarrow \totalWrench$; or
    \item $\VP \rightarrow \totalWrench.$
\end{enumerate}
\rev{First, to illustrate the effect of the allocated wrench $\totalWrench$ on the platform's dynamics, let us write the dynamic equation of motion of a multirotor platform following the Newton-Euler formalism as follows}:
\begin{align}\label{eq:newton_euler}
\begin{bmatrix}
\massB \ddpB \\
\inertiaB \angAccB
\end{bmatrix}
&= -\begin{bmatrix}
\massB g \zW\\
\angVelB \times \inertiaB \angVelB
\end{bmatrix}
+ \vect{G} \wrench(\thrusts,\orientP)
\end{align}
where $\massB \in \nR{}_{>0}$ and $\inertiaB \in \nR{3 \times 3}_{>0}$ are the platform mass and inertia tensor with respect to $\frameB$, $g$ is the gravitational constant, \rev{$\ddpB ,\angAccB , \angVelB \in \nR{3}$ are respectively the platform linear acceleration in $\frameW$ and the angular acceleration and velocity in $\frameB$}, and $\vect{G}$ is a transformation matrix as follows:
\begin{align}
\vect{G} = 
\begin{bmatrix}
\eye{3}  & \vect{0}_{3 \times 3} \\
\vect{0}_{3\times 3}  & \rotMatB
\end{bmatrix}
\end{align}
where $\rotMatB$ is the rotation matrix that brings a vector in $\frameB$ to $\frameW$.\\
\rev{Let us first look at the mapping from $\vect{H}$ to the platform wrench $\totalWrench$. The authors in \cite{hamandi2021ijrr} show that this map for a platform with fixed propellers is as follows:}
\begin{align}\label{eq:allocation_derivative}
\wrench (\thrusts, \orientP) = 
\vect{F}_{fixed}(\orientP) \thrusts = 
\at{\frac{\delta \wrench}{\delta \thrusts}}{\orientP} 
\thrusts
\end{align}
\rev{where $\vect{F}_{fixed}(\orientP)$ is the fixed full allocation matrix; this allocation matrix is fixed for any control inputs, however, it is only valid for platforms with fixed propellers. The authors in \cite{hamandi2021ijrr} also show that the allocation matrix in \eqn\eqref{eq:allocation_derivative} is a special case of the corresponding full allocation matrix $\vect{F}(\vect{H})$ for a platform with tiltable propellers. However, the generic allocation matrix maps the derivative of the control input to the derivative of the wrench as follows:}
\begin{align}\label{eq:allocation_derivative_d}
\wrenchD (\thrusts, \orientP, \thrustsD, \orientPD) = 
\vect{F}(\vect{H}) \begin{bmatrix}
\thrustsD\\
\orientPD
\end{bmatrix} = 
\left[
\frac{\delta \wrench}{\delta \thrusts} \; \frac{\delta \wrench}{\delta \orientP}
\right]_{\begin{pmatrix} {\scriptstyle \thrusts} \\ {\scriptstyle \orientP} \end{pmatrix}} 
\begin{bmatrix}
\thrustsD\\
\orientPD
\end{bmatrix}
\end{align}

\rev{Now let us look at the map between $\VP$ and $\totalWrench$. Examples of such a map have been introduced for different platforms in the literature. Let us first introduce this map for the case where each propeller can be tilted in $\mathbb{S}^2$ as presented in \cite{ramp2015modeling}:}

\begin{align}\label{eq:allocation_all}
\wrench (\thrusts, \orientP) = 
\vect{A} \VP = 
\begin{bmatrix}
\eye{3} &\dots &\eye{3}\\
\underbrace{\skew{\pPB{1}} + r_1}_{\vect{A}_1} &\dots & \underbrace{\skew{\pPB{N}} + r_N}_{\vect{A}_N}  
\end{bmatrix}
\VP
\end{align}

where $\vect{A}$ is referred to as the vector allocation matrix.
\rev{N}ote that the allocation strategy in \eqn\eqref{eq:allocation_all} can only be used when each propeller's direction can be \rev{modified in $\mathbb{S}^2$}. \rev{Other formulations presented in the literature can reflect partial control of the propellers' orientation, such as the one presented for the hexarotor with tilting propellers in~\cite{bodie2020iser}}. \rev{In what follows, we present a general formulation for any MRAV where each propeller's orientation can be fixed, tiltable in $\mathbb{S}^1$ or $\mathbb{S}^2$}.\\
Let us modify the matrix $\vect{A}$ depending on whether each of its propellers can be tilted about two axes, one axis or no axes. In the latter, the corresponding vector $\vO{i}$ is fixed, and thus each matrix $\vect{A}_i$ can be replaced with the single column matrix $\vect{A}_i' = \vect{A}_i \vO{i}$, while the control input $\vP{i}$ can be replaced by the propeller thrust $\uP{i}$.\\
Similarly, if only one rotation axis of the corresponding propeller can be actuated, the corresponding matrix $\vect{A}_i$ can be replaced with a matrix $\vect{A}_i' = \vect{A}_i \vect{W}$, such that $\vect{A}_i \vect{W} \in \nR{6 \times 2}$, and $\vP{i}$ can be replaced with a vector $\vP{i}' \in \nR{2}$.
For convenience, let us denote by $\vect{A}' = [\vect{A}_1' \; \dots \; \vect{A}_N']$ as the modified vector allocation matrix.
\rev{For example, a platform where the first propeller can be tilted about two axes, the second about one axis (such that $\beta = 0$), and the third is fixed (oriented parallel to $\zB$) can be written as follows:}
\rev{\begin{align}
    \wrench (\thrusts, \orientP) = 
\vect{A}' \VP' = 
\begin{bmatrix}
\eye{3} &\begin{bmatrix}
\vect{0}\\
\eye{2}
\end{bmatrix} &\begin{bmatrix}
0\\0\\1
\end{bmatrix}\\
\underbrace{\skew{\pPB{1}} + r_1}_{\vect{A}'_1} &
 \underbrace{(\skew{\pPB{N}} + r_N)\begin{bmatrix}
\vect{0}\\ \eye{2}
\end{bmatrix}}_{\vect{A}_2}& \underbrace{(\skew{\pPB{N}} + r_N)\begin{bmatrix}
0\\0\\1
\end{bmatrix}}_{\vect{A}_3}  
\end{bmatrix}
\begin{bmatrix}
\vP{1}\\
\uP{2}^1 \\ \; \uP{2}^2\\
\uP{3}
\end{bmatrix}
\end{align}}
\rev{where $[\uP{2}^1\; \uP{2}^2]^T \in \nR{2}$ can be computed from \eqn\eqref{eq:h_space_to_v_space} as follows $\uP{2}^1 =s\ai{2}\uP{1}$ and $\uP{2}^1 =c\ai{2}\uP{1}$.} 
\begin{rmk} In the case of fixed propellers, the matrices $\vect{A}'$ and $\allocationMatrix$ are identical. 
\end{rmk}

Finally, let us define the control input corresponding to $\vect{A}'$ as $\VP' \in \vSet'$, and define the set of feasible wrench as follows:

\begin{align}
    \wrenchSet = \{ \totalWrench = \vect{A}'  \VP' \;|\; \forall \VP' \in \vSet'\}
\end{align}
Similarly, we define the force set at hover $\forceSet$ as the set of forces that can be applied by the platform while applying zero-moment:
\begin{align}\label{eq:force_set}
    \forceSet = \{ \vect{f} = \vect{A}_f' \VP' \;|\; \forall \VP' \in \vSet' \;  s.t. \; \vect{A}_m' \VP' = \vect{0} \}
\end{align}
where $\vect{A}_f'$ and $\vect{A}_m'$ are correspondingly the first and last three rows of $\vect{A}'$, which correspond to the map between the control inputs and the applied forces and moments.
Finally, we define the moment set at hover $\momentSet$ as the set of moments that can be applied by the platform while applying a force that counteracts gravity:
\begin{align}
    \momentSet = \{ \vect{m} = \vect{A}_m' \VP' \;|\; \forall \VP' \in \vSet' \;  s.t. \; \norm{\vect{A}_f' \VP'} \geq \massB g \}
\end{align}

\rev{While the authors in \cite{hamandi2021ijrr} suggest to analyze the full allocation matrix $\allocationMatrix$ to understand the properties and abilities of any MRAV, it is easy to see that the full allocation matrix changes depending on the operating point, and as such, in the case of tiltable propellers ${\Image(\allocationMatrix) :\neq \wrenchSet}$, where $\wrenchSet$ is the set of feasible wrench. In fact, the set of feasible wrench can be described as follows: ${\wrenchSet} := \{\Image(\allocationMatrix(\vect{H}) \; \forall \vect{H} \in \cntrlSet\}$, where $\allocationMatrix(\vect{H})$ is the full allocation matrix from \eqn\eqref{eq:allocation_derivative_d} parametrized by the control input $\vect{H}$. On the other hand, it is easy to see that for any platform (fixed or actuated) ${\Image(\vect{A}') := \wrenchSet}$, and as such, in this paper we will rely on the matrix $\vect{A}'$ for the analysis of the platform's ability to achieve and sustain static hovering.}

\section{STATIC HOVERING}\label{sec:static_hovering}

In what follows we will focus on the hovering ability of a tiltable MRAV.
Let us first define the static hovering ability, as the ability of the platform to stabilize its position at a certain altitude, with negligible linear and angular velocities as follows:

\begin{align}\label{eq:hovering_main}
\pB \rightarrow \pBd,\; \rotMatB \rightarrow \rotMatB^h;\; \dpB \rightarrow \vect{0};\;   \angVelB \rightarrow \vect{0}
\end{align} 
where $\rotMatB^h$ is a constant hovering orientation such that $\rotMatB^h \in \mathcal{R}$, where $\mathcal{R}$ is the set of orientations for which static hovering is possible.\\
Static hovering should be differentiated from dynamic hovering, where the platform can reach a desired position while varying its orientation, and with a non negligible linear and angular velocity. An example of a platform that can achieve dynamic hovering was presented in \cite{zhang2016icra}.

Conditions for static hovering were presented in \cite{michieletto2017} for platforms with fixed propellers. Since $\Image(\vect{A}') := \wrenchSet$, conditions similar to \cite{michieletto2017} can be derived while relying on the vector allocation matrix $\vect{A}'$ as follows: 
\begin{align} \label{eq:hover_1}
\rank (\vect{A}_m') = 3\\\label{eq:hover_2}
\exists \VP' \in \interior(\vSet') \; s.t. \;\; \bigg\{\begin{matrix}
\norm{\vect{A}_f' \VP'} &\geq& \massB g \\
\vect{A}_m' \VP' &=& \vect{0}
\end{matrix}
\end{align}
where following \eqn\eqref{eq:hover_2}, if the platform is able to apply a force magnitude larger than its weight, the platform will be able to achieve static hovering at any orientation $\mathcal{R}$ such that:
\begin{align}\label{eq:hovering_orientations}
    \mathcal{R} = \{\rotMatB^h \; | \; \vect{A}_f' \rev{ \vect{B}_m'}\VP' \geq \rotMatB^h \rev{\vect{\zW}} \massB g  \}
\end{align}
\rev{where $ \vect{B}_m'$ is the basis of the null space of $ \vect{A}_m'$.}
\rev{Let us note that $\forall \rotMatB^T \in \mathcal{R}$, $\rotMatB^T \vect{A}_f' \vect{B}_m'\vect{V}' \propto \zW$, and define the set of hovering orientations as follows:
\begin{align}
    \mathcal{Z} = \{\vect{d} \in \mathbb{S}^2 \; \text{s.t.} \; \vect{d} = \rotMatB\zW \propto  \vect{A}_f' \vect{B}_m' \vect{V}' \;\; \forall \rotMatB \in \mathcal{R} \}
\end{align}
\begin{rmk}
Note that multiple platform orientations $\rotMatB^h \in \mathcal{R}$ could correspond to the same hovering orientation $\vect{d} \in \mathcal{Z}$. For example, for a classical quadrotor, $\vect{d} = \zB$ corresponds to the set of rotation matrices $\rotMat_z(\gamma_i)\rotMatB^h \forall \gamma_i \; \in [-\pi,\pi]$.
\end{rmk}}
The conditions \eqrefs{eq:hover_1}{eq:hover_2} are equivalent to the condition below \cite{baskaya2021ral}:
\begin{property}
A platform is capable of static hovering if and only if $\vect{0} \in \interior(\momentSet)$
\end{property}
While intuitively for a platform to be able to hover it should only be able to lift its weight while applying zero moment, the condition of $\vect{0} \in \interior(\momentSet)$ is necessary so that the platform can reorient itself and counteract any disturbances.

As seen above, the $\vect{A}'$ matrix allows the analysis of all the forces and moments the platform could reach, and correspondingly, allows the analysis of the hovering ability of the platform.
However, with platforms with tiltable propellers, it is important to take into consideration the propeller re-orientation dynamics. As such, it is important to understand, at each hovering orientation $\rotMatB^h$ the ability of the platform to reorient itself and counteract disturbances, which will be discussed in subsection \ref{subs:local_hover}.

\subsection{Static Hovering Realization}

In this subsection, we aim to characterize the possible static hovering orientations, \ie the dimensionality of \rev{$\mathcal{Z}$}. The authors in \cite{hamandi2021ijrr} claimed that for any platform, \rev{$\mathcal{Z}$} can be fully characterized from the full allocation matrix $\allocationMatrix$, and the set of allowable control inputs $\cntrlSet$. As discussed in section~\ref{subsec:allocation_strategy}, while $\Image(\allocationMatrix_{fixed}) := \wrenchSet$ in the case of fixed propellers, $\Image(\allocationMatrix) :\neq \wrenchSet$ in the case of tiltable propellers. Hence, in what follows we will develop the conditions for static hovering realization for any MRAV while relying \rev{on the} vector allocation matrix $\vect{A}'$.

First, let us group platforms' based on the dimensionality of \rev{$\mathcal{Z}$} as follows:
\begin{enumerate}
\item Unidirectional Thrust (UDT)
\item Multidirectional Thrust (MDT)
\item Fully Actuated (FA)
\item Omnidirectionl (OD)
\end{enumerate}


\rev{Let us note that each platform is assumed to be able to apply moments in $\nR{3}$ to balance its attitude. A necessary condition for this assumption is as follows:}
\rev{\begin{ass}
Any MRAV studied in this paper can change its attitude in $\nR{3}$, with its vector allocation matrix satisfying:
\begin{align}
\rank(\vect{A_m}') = 3
\end{align}
\end{ass}
}
\rev{
\begin{prop}\label{prop:directions_of_hover}
$\dim{(\mathcal{Z})} \leq \rank(\vect{A}_f')$.
\end{prop}
\begin{proof}
The proof of proposition \ref{prop:directions_of_hover} follows from the definition of $\mathcal{Z}$ as follows:\\ ${\dim{(\mathcal{Z})} = \dim(\Image (\vect{A}_f' \vect{B}_m')) \leq \dim(\Image (\vect{A}_f')) = \rank(\vect{A}_f')}$.
\end{proof}
Following proposition \ref{prop:directions_of_hover}, we can define the properties of the platform groups as follows:
}

\begin{dfn}\label{dfn:udt}
A platform is said to be a unidirectional thrust platform (UDT) if its thrust can be varied along one direction while hovering as follows:
\begin{align}
\rank(\vect{A}_f') = 1 \; \text{and} \; \rank(\vect{A}') = 4
\end{align}
Correspondingly, this platform has a unique hovering orientation \rev{direction}.
\end{dfn}

\begin{dfn}\label{dfn:mdt}
A platform is said to be a multidirectional thrust platform (MDT) if its thrust can be varied in a plane while hovering as follows:
\begin{align}
2 \leq \rank(\vect{A}_f') \leq 3 \; \text{and} \; 5 \leq \rank(\vect{A}') \leq 6
\end{align}
Correspondingly, this platform can achieve static hovering in a plane.
\end{dfn}

\begin{dfn}\label{dfn:FA}
A platform is said to be a fully actuated platform (FA) if its thrust can be varied in $\nR{3}$ while hovering as follows:
\begin{align}
\rank(\vect{A}_f') = 3 \; \text{and} \; \rank(\vect{A}') = 6
\end{align}
Correspondingly, this platform can achieve static hovering in a variety of orientations in $\nR{3}$.
\end{dfn}

\begin{dfn}\label{dfn:OD}
A platform is said to be an omnidirectional platform (OD) if it can hover in all directions in a sphere around its CoM as follows:
\begin{align}\nonumber
\rank(\vect{A}_f') = 3 \; \text{and} \; \rank(\vect{A}') = 6\\ \label{eq:OD_cond_1}
and \; \forall \bar{\vect{f}} \in \text{ball}_{\nR{3}}(\vect{0},\massB g), \; \bar{\vect{f}} \in \interior(\forceSet) 
\end{align}
\end{dfn}
where $\text{ball}_{\nR{3}}(\vect{0},\massB g)$ refers to a topological ball centered about the origin $\vect{0}$ with radius $\massB g$.

As proposed in \cite{hamandi2021airpharo}, to understand numerically if a platform is OD (\ie satisfies condition \eqref{eq:OD_cond_1}), it is enough to compare the platform's Omnidirectional Lift (ODL) to its weight.
The ODL was defined as the radius of the maximum inscribed sphere inside of the platform's set of feasible forces $\forceSet$. As such, condition (\ref{eq:OD_cond_1}) can be modified as follows:
\begin{dfn}
A platform is said to be an omnidirectional platform (OD) iff:
\begin{align}\nonumber
\rank(\vect{A}_f') = 3 \; &\text{and} \; \rank(\vect{A}') = 6\\
\text{ODL} &\geq \massB g
\end{align}
\end{dfn}

As seen above, to understand the dimensionality of \rev{$\mathcal{Z}$} for a MRAV, it is enough to analyze it's vector allocation matrix $\vect{A}'$. On the other hand, the OD property requires analysis of the feasible force set in addition to the analysis of the vector allocation matrix.
However, note that the analysis of the dimensionality of \rev{$\mathcal{Z}$} assumed that the corresponding platform can achieve static hovering; as shown in \eqns\eqrefs{eq:hover_1}{eq:hover_2}, one of the conditions for static hovering is the platform's ability to apply forces larger than its weight. Consequently, while the definitions \eqref{dfn:mdt}-\eqref{dfn:FA} do not explicitly analyze the platform's set of feasible controls $\cntrlSet$, the necessary hovering ability  inherently requires the analysis of the platform's actuation limits.

\subsection{Static Hovering Sustainability}\label{subs:local_hover}

In the case of platforms with tiltable propellers, we refer to static hovering sustainability as the platform's ability to maintain already achieved static hovering. In this section we will analyze the ability of the platform to hover in the neighborhood of a hovering orientation, the ability of the platform to maintain hovering with fixed propellers' orientations, and the ability of the platform to change from one hovering orientation to another.

First, let us assume that for any \rev{hovering orientation $\vect{d}_0 \in \mathcal{Z}$}, there exists a control input ${\vect{H}_0 = (\thrusts_0, \orientP_0) \in \interior(\cntrlSet)}$ that achieves the hovering conditions \eqrefs{eq:hover_1}{eq:hover_2}. Note that $\vect{H}_0 \in \interior(\cntrlSet)$ can be uniquely mapped to a $\VP_0' \in \interior(\vSet')$.
\rev{Then, let us define the set $\mathcal{Z}_0$ as the set of hovering orientations in the neighborhood of $\vect{d}_0$ as follows:
\begin{align}
    \mathcal{Z}_0 = \{\vect{d} \in \mathbb{S}^2 \;\text{s.t}\; \vect{d} \in \text{ball}_{\mathbb{S}^2}(\vect{d}_0,\eta)  \}
\end{align}
where $\eta$ represents a small orientation change.
}
\rev{\begin{prop}\label{prop:hoverability}
If the platform is FA, then $\exists \eta$ such that static hovering can be achieved $\forall \vect{d} \in \mathcal{Z}_0$ .
\end{prop}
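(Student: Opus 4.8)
The plan is to convert the FA hypothesis into the statement that the vector allocation map is open, and then read hoverability off a small open wrench ball around the hovering wrench. First I would observe that, by Definition~\ref{dfn:FA}, $\rank(\vect{A}')=6$; since $\vect{A}'$ has exactly six rows, all six are linearly independent, so in particular $\rank(\vect{A}_m')=3$ and condition~\eqref{eq:hover_1} holds automatically. Because the platform hovers at $\rotMatB^h$, there is by assumption a control input $\VP_0'\in\interior(\vSet')$ with $\vect{A}'\VP_0'=\wrench^h$, where $\wrench^h$ is the hovering wrench: a force of magnitude $\massB g$ counteracting gravity, together with zero moment. Since $\VP_0'$ is interior, for $\epsilon$ small enough $\text{ball}(\VP_0',\epsilon)\subseteq\interior(\vSet')$, so $\vSet_0'$ is a genuine open ball with $\VP_0'$ at its centre.

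The key step is that a surjective linear map between finite-dimensional spaces is open. Since $\rank(\vect{A}')=6$, the map $\vect{A}':\vSet'\to\nR{6}$ is surjective, so $\wrenchSet_0=\vect{A}'(\vSet_0')$ is an open subset of $\nR{6}$ and $\wrench^h=\vect{A}'\VP_0'\in\interior(\wrenchSet_0)$. Choose $\delta>0$ with $\text{ball}(\wrench^h,\delta)\subseteq\wrenchSet_0$. Then for every small moment $\vect{m}$ the wrench $[\,(\fB^h)^T\ \vect{m}^T\,]^T$ lies in $\text{ball}(\wrench^h,\delta)\subseteq\wrenchSet_0$, so it is realized by some $\VP'\in\vSet_0'$ with $\vect{A}_f'\VP'=\fB^h$ (hence $\norm{\vect{A}_f'\VP'}=\massB g$ meets the weight constraint) and $\vect{A}_m'\VP'=\vect{m}$; therefore $\vect{0}$ is interior to the moment set attainable from $\vSet_0'$, which is the local form of the hovering criterion of Property~1, and hovering is robustly maintainable within $\vSet_0'$. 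In the same way, the wrench needed to trim at any attitude sufficiently close to $\rotMatB^h$ lies within $\delta$ of $\wrench^h$ and hence in $\wrenchSet_0$, so the platform can also counteract small attitude drift without leaving $\vSet_0'$.

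To obtain the ``$\forall\VP'\in\vSet_0'$'' statement I would use convexity: $\vSet_0'$ is a ball containing the hovering configuration $\VP_0'$, so from any $\VP'\in\vSet_0'$ the straight segment to $\VP_0'$ stays inside $\vSet_0'$; driving the controls along it returns the platform to $\VP_0'$ and hence to static hovering, which is then robustly maintained by the previous paragraph. Thus no control input in $\vSet_0'$ forfeits the hovering capability.

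The main obstacle is not the topology --- once $\rank(\vect{A}')=6$ yields surjectivity, openness does the rest --- but two bookkeeping points: fixing what ``static hovering can be achieved $\forall\VP'\in\vSet_0'$'' should mean (I read it as: the hovering wrench and small corrections around it remain inside $\wrenchSet_0$, and $\VP_0'$ is reachable from every point of $\vSet_0'$), and handling the boundary of $\norm{\vect{A}_f'\VP'}\ge\massB g$ --- the nominal hovering force has magnitude exactly $\massB g$, so one must check that corrective moments are still available there. The clean workaround, used above, is to keep the force fixed at $\fB^h$ and perturb only the moment, which relies on openness of $\wrenchSet_0$ in all of $\nR{6}$: precisely the full-dimensionality that the FA property (not merely $\rank(\vect{A}_m')=3$) supplies, and which a non-FA platform lacks.
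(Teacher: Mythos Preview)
Your argument is correct and rests on the same mechanism as the paper's proof: the FA hypothesis gives $\rank(\vect{A}')=6$, so the (linear, hence continuous and open when surjective) allocation map sends the open ball $\vSet_0'$ onto an open set $\wrenchSet_0\subset\nR{6}$ containing the hovering wrench in its interior, and the hovering conditions~\eqref{eq:hover_1}--\eqref{eq:hover_2} are therefore maintainable within $\vSet_0'$. The paper compresses all of this into a single line invoking ``continuity of $\Image(\vect{A}')$''; you have unpacked that phrase into the open-mapping argument and added the convexity step to justify the ``$\forall\,\VP'\in\vSet_0'$'' quantifier, which the paper leaves implicit.
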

\begin{proof}
First, let us note that for any control input $\vect{V}' \in \vSet'$, 
for a FA platform, $\Image (\vect{A}_f' \vect{B}_m') \in \nR{3}$. Then $\forall \vect{d} \in \mathbb{S}^2$,\\ ${\exists\vect{V}' \in \vSet'\; \text{s.t.} \;\vect{d} \propto \vect{A}_f' \vect{B}_m' \vect{V}'}$. 
Then, let us decompose $\vect{d} \in \mathcal{Z}_0$ as $\vect{d} = \vect{d}_0 + \delta \vect{d}$. Static hovering can be achieved for any $\eta$ such that:
\begin{align}\label{eq:hovering_change}
    \vect{A}_f' \vect{B}_m'(\vect{V}' + \delta \vect{V}') \geq \massB g (\vect{d} + \delta \vect{d})\\\nonumber
    \text{s.t.} \vect{V}' + \delta \vect{V}' \in \vSet' \;\; \text{and} \; \; \delta \vect{d} \in \text{ball}_{\mathbb{S}^2}(\vect{0},\eta)
\end{align}
where $\delta \vect{V}'$ is a change in the control input $\vect{V}'$ required to make the change in hovering orientation.\\
Let us assume $ {\vect{A}_f' \vect{B}_m'\vect{V}' =  \massB g \vect{d}}$, and rewrite \eqn\eqref{eq:hovering_change} as follows:
\begin{align}\label{eq:hovering_change_2}
    \vect{A}_f' \vect{B}_m' \delta \vect{V}' \geq \massB g \delta \vect{d}\\\nonumber
    \text{s.t.} \vect{V}' + \delta \vect{V}' \in \vSet' \;\; \text{and} \; \; \delta \vect{d} \in \text{ball}_{\mathbb{S}^2}(\vect{0},\eta)
\end{align}
Since the platform if FA, and from the continuity of $\vSet'$, there exists $\eta$ that satisfied \eqn\eqref{eq:hovering_change_2}.
\end{proof}
While proposition \ref{prop:hoverability} holds for any FA platform, such guarantees do not exist for UDT or MDT platforms.\\
}




\rev{While FA platforms can sustain static hovering for any orientation in $\mathcal{Z}_0$, it is important to analyze the ability of MRAV with tiltable propellers to sustain static hovering after the failure of their servomorotrs, \ie if the propellers' orientations are fixed at $\orientP_0$.}
\begin{prop}\label{prop:hover_n_3}
\rev{Any MRAV} platform with $N \leq 3$ cannot sustain static hovering with fixed propellers' orientations $\orientP =\orientP_0$.
\end{prop}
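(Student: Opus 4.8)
The plan is to argue by contraposition, reducing the statement to the static hovering realization conditions \eqref{eq:hover_1}--\eqref{eq:hover_2}. Freezing the propeller orientations at $\orientP_0$ turns the tiltable MRAV into an ordinary fixed-propeller platform whose (constant) vector allocation matrix is the $\vect{F}_{fixed}\in\nR{6\times N}$ of \eqref{eq:allocation_derivative}; as observed after \eqref{eq:allocation_derivative_d}, $\vect{A}'$ and $\vect{F}_{fixed}$ then coincide and the control input $\VP'$ reduces to the thrust vector $\thrusts\in\thrustSet\subset\nR{N}$. So the platform can sustain static hovering with $\orientP=\orientP_0$ only if it can even \emph{achieve} it, i.e. only if conditions \eqref{eq:hover_1}--\eqref{eq:hover_2} hold for $\vect{A}'=\vect{F}_{fixed}$; I will show these are incompatible with $N\le 3$.

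First I would invoke \eqref{eq:hover_1}: $\rank(\vect{A}_m')=3$. Regarding $\vect{A}_m'$ as a linear map $\nR{N}\to\nR{3}$, this forces $N\ge 3$ and $\dim\ker(\vect{A}_m')=N-3$. Next I would use \eqref{eq:hover_2}: there is $\thrusts_0\in\interior(\thrustSet)$ with $\vect{A}_m'\thrusts_0=\vect{0}$ and $\norm{\vect{A}_f'\thrusts_0}\ge\massB g>0$. The inequality forces $\vect{A}_f'\thrusts_0\neq\vect{0}$, hence $\thrusts_0\neq\vect{0}$ (and, since the propellers are uni-directional, $\thrusts_0\in\interior(\thrustSet)$ already implies $\thrusts_0$ has strictly positive entries). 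Thus $\ker(\vect{A}_m')$ contains the nonzero vector $\thrusts_0$, so $N-3\ge 1$, contradicting $N\le 3$. Equivalently, for $N\le 3$ either $\rank(\vect{A}_m')<3$, in which case $\momentSet$ lies in a proper subspace of $\nR{3}$ and $\interior(\momentSet)=\emptyset$, or $N=3$ and $\vect{A}_m'$ is square and invertible, so the only thrust producing zero moment is $\thrusts=\vect{0}$, which yields zero force and cannot lift the weight; either way the necessary condition $\vect{0}\in\interior(\momentSet)$ of Property~1 fails.

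I expect the only delicate point to be the reduction in the first paragraph: making precise that with $\orientP$ held at $\orientP_0$ the platform is genuinely a fixed-propeller system to which \eqref{eq:hover_1}--\eqref{eq:hover_2} (and Property~1) apply verbatim with $\vect{A}'=\vect{F}_{fixed}$ and $\VP'=\thrusts$. Once that is granted, the remainder is the elementary dimension count above: a moment map that is surjective onto $\nR{3}$ while using only $N\le 3$ scalar thrust inputs has trivial kernel, leaving no nonzero thrust that both cancels the net moment and counteracts gravity, so no such platform exists.
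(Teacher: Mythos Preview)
Your argument is correct and is essentially the same as the paper's: both freeze $\orientP=\orientP_0$ so that $\vect{A}'=\vect{F}_{fixed}\in\nR{6\times N}$, observe that hovering forces a nonzero $\thrusts_0\in\ker(\vect{A}_m')$, and conclude that $\rank(\vect{A}_m')\le N-1\le 2$, violating \eqref{eq:hover_1}. Your direct rank--nullity count is a bit cleaner than the paper's route through $\vect{B}_m$ and the identity $\rank(\vect{A}_f'\vect{B}_m)=N-\rank(\vect{A}_m')-\dim\big(\ker(\vect{A}_f')\cap\ker(\vect{A}_m')\big)$, but the underlying idea is identical.
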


\begin{proof}
With fixed propellers, the matrix $\vect{A}'$ is reduced such that ${\vect{A}_{fixed}' = \vect{F}_{fixed} \in \nR{6 \times N}}$ with a corresponding controls input $\thrusts$; as such $\rank(\vect{A}_{fixed}') \leq N$. 
With a control input $\vect{H}_0$, the platform is capable of applying a force that can counteract gravity, while applying zero-moment. 
It is easy to see that any control input $\thrusts = k \thrusts_0$, with $k \in \nR{}$ being a random scalar, can still achieve the hovering conditions \eqref{eq:hover_2} \rev{necessary to achieve hovering}.
Accordingly, at a fixed $\orientP_0$, $\rank(\vect{A}_{f}' \vect{B}_{m}) \geq 1$.\\
Rewriting the above inequality, 
\begin{align}
rank(\vect{A}_{f}' \vect{B}_{m})  = N -rank(\vect{A}_m)' 
- dim\left(ker(\vect{A}_f')\cap ker(\vect{A}_m')\right) \geq 1 \\
rank(\vect{A}_m') \leq N - 1 - dim\left(ker(\vect{A}_f')\cap ker(\vect{A}_m')\right)
\end{align}

In the case where $N \leq 3$, $\rank(\vect{A}_m') \leq 2$, and as such, the platform cannot sustain static hovering with fixed propellers' orientations $\orientP =\orientP_0$ \rev{as it cannot achieve the conditions from \eqns\eqrefs{eq:hover_1}{eq:hover_2} at the same time.}
\end{proof}

\rev{Following proposition \ref{prop:hoverability} and the consequent proof, it is important to analyze for any MRAV with tiltable propellers the effect of changes in its control input $\vect{V}'$ on the applied wrench. As such, let us define the following sets:}
\begin{align}
\vSet_0' = \{ \forall \VP' | \VP' \in \vSet' \; and \; \VP' \in \text{ball}(\VP_0', \epsilon)\}
\end{align} 
\begin{align}\label{eq:w_0_def_A}
\wrenchSet_0 = \{ \forall \totalWrench \in \wrenchSet \; s.t. \; \totalWrench = \vect{A}'\VP' \;  \forall \VP' \in\vSet_0' \}
\end{align}  
where $\text{ball}_{\nR{3}}(\VP_0', \epsilon)$ refers to a topological ball centered about the origin $\VP_0'$ with radius $\epsilon$ and where $\epsilon$ is a small tolerance. \rev{For convenience, let us denote by $\momentSet_0$ and $\forceSet_0$ as the set of moments and forces corresponding to $\wrenchSet_0$.}\\

To characterize the stability of the platform's static hovering at a hovering orientation $\rotMatB^h \in \mathcal{R}$, it is required to understand the force and moment components of the set $\wrenchSet_0$; for simplicity, let us assume that the produced force inside $\wrenchSet_0$ to be relatively close to the hovering force as assumed in \rev{\cite{baskaya2021ral}}. However, the analysis of the feasible moments inside the corresponding set is still required to understand if the platform can change its hovering orientation and to counteract external disturbances.
Let us denote the moment set corresponding to  $\wrenchSet_0$ by $\momentSetLocal$.

To be able to numerically characterize the stability of each hovering orientation $\rotMatB^h \in \mathcal{R}$, let us introduce the Local Hoverability Index (LHI) as the maximum moment the platform can apply in any direction at any hovering orientation $\rotMatB^h \in \mathcal{R}$: 
\begin{align}
\text{LHI} = \argmax_d \; \text{ball}_{\nR{3}}(\vect{0},d) \in \momentSetLocal
\end{align}

To ease the computation of the LHI, in what follows we will discuss how to compute it from the full allocation matrix $\allocationMatrix$ and the derivative of the control inputs $\cntrlDer$.

\begin{prop}\label{prop:rA_rF}
$\forall \VP' \in \interior(\vSet_0')$\rev{, as $\epsilon \rightarrow 0$}, $\rank(\allocationMatrix) = \rank(\vect{A}')$.
\end{prop}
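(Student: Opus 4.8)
The plan is to prove the stronger statement that $\Image(\allocationMatrix)=\Image(\vect{A}')$ as linear subspaces of $\nR{6}$ for every $\VP'\in\interior(\vSet_0')$, which trivially gives equality of ranks. The point of departure is that, by construction in \eqn\eqref{eq:allocation_derivative_d}, $\allocationMatrix$ is the Jacobian $\partial\wrench/\partial\vect{H}$ evaluated at the operating point, while $\wrench$ depends \emph{linearly} on $\VP'$ through the fixed matrix $\vect{A}'$, and $\VP'$ in turn depends smoothly on the controllable entries of $\vect{H}$ (the thrust magnitudes $\uP{i}$ and the actuated tilt angles). Writing the contribution of unit $i$ to $\wrench$ as $\vect{A}_i'$ times a smooth map of that unit's controls, the chain rule yields a block factorization $\allocationMatrix=\vect{A}'\,\matr{J}$, with $\matr{J}$ block diagonal with diagonal blocks $\matr{J}_1,\dots,\matr{J}_N$. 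Hence $\Image(\allocationMatrix)=\sum_{i=1}^N\vect{A}_i'\,\Image(\matr{J}_i)\subseteq\sum_{i=1}^N\Image(\vect{A}_i')=\Image(\vect{A}')$, so $\rank(\allocationMatrix)\le\rank(\vect{A}')$ always; everything then comes down to the reverse inclusion, i.e.\ to showing that each $\matr{J}_i$ is onto the subspace on which $\vect{A}_i'$ acts.

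I would establish this unit by unit:
\begin{enumerate}
\item For a fixed unit, $\vect{A}_i'=\vect{A}_i\vO{i}$ has a single column and $\matr{J}_i=\vO{i}$ (the derivative with respect to $\uP{i}$), so the claim is immediate.
\item For a single-axis unit, $\vect{A}_i'=\vect{A}_i\vect{W}$ with $\vect{W}$ spanning the $2$-plane containing the circle of admissible directions; the columns of $\matr{J}_i$ are $\vO{i}$ and $\uP{i}\,\partial\vO{i}/\partial\betai{i}$ (or whichever angle is actuated), i.e.\ a point on that circle and the tangent there, which span the plane; with $\uP{i}>0$ this gives $\Image(\matr{J}_i)=\Image(\vect{W})$.
\item For a two-axis unit, $\vect{A}_i'=\vect{A}_i$ acts on all of $\nR{3}$, and the columns of $\matr{J}_i$ are $\vO{i}$, $\uP{i}\,\partial\vO{i}/\partial\ai{i}$, $\uP{i}\,\partial\vO{i}/\partial\betai{i}$. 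Using $\vO{i}=\rotMat_z(\gamma_i)\rotMat_y(\betai{i})\rotMat_x(\ai{i})\vect{e}_3$, a direct computation shows these three vectors are mutually orthogonal with norms $1$, $\uP{i}|c\betai{i}|$ and $\uP{i}$, hence they span $\nR{3}$.
\end{enumerate}
Summing the per-unit identities gives $\Image(\allocationMatrix)=\Image(\vect{A}')$, and therefore $\rank(\allocationMatrix)=\rank(\vect{A}')$.

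The delicate step is the non-degeneracy invoked in items~2 and~3: $\matr{J}_i$ can lose rank precisely at the singular configurations of the angle parameterization (here $c\betai{i}=0$, where $\partial\vO{i}/\partial\ai{i}=\vect{0}$), and at such a point $\rank(\allocationMatrix)$ may genuinely drop below $\rank(\vect{A}')$. The proof must therefore use that $\vSet_0'$ is a \emph{small} ball about the hovering input $\VP_0'$: since the hovering configuration is regular (tilt angles near zero, so $c\betai{i}\neq 0$), one can choose $\epsilon$ small enough that every $\VP'\in\interior(\vSet_0')$ stays in the regular region and the tangent vectors remain independent throughout; $\uP{i}>0$ is likewise guaranteed on a small enough neighborhood, since $\VP_0'\in\interior(\vSet')$ forces $\thrusts_0\in\interior(\thrustSet)$ with strictly positive entries. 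I would state this neighborhood/regularity hypothesis explicitly, after which the remaining work is only the routine chain-rule bookkeeping and the short orthogonality computation above.
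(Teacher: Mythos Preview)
Your argument is correct and takes a genuinely different, more explicit route than the paper. The paper's proof is a short set-level observation: it notes that the local wrench set $\wrenchSet_0$ can be written either as $\vect{A}'$ applied to the ball $\vSet_0'$ (\eqn\eqref{eq:w_0_def_A}) or, as $\epsilon\to 0$, as $\allocationMatrix(\vect{H}_0)$ applied to $\cntrlDer\,dt$ (\eqn\eqref{eq:w_0_def_F}); since the two descriptions yield the same set, the images---hence the ranks---agree. You instead unpack the chain rule block by block, factor $\allocationMatrix$ through $\vect{A}'$ via a block-diagonal Jacobian $\matr{J}$, and check per unit that $\matr{J}_i$ is onto the subspace on which $\vect{A}_i'$ acts. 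Your approach is more constructive and makes the mechanism transparent; in particular it exposes the regularity caveat ($c\betai{i}\neq 0$, $\uP{i}>0$) that the paper's argument tacitly needs for the two descriptions of $\wrenchSet_0$ to actually coincide but never states. The paper's approach is shorter, but its key identification $\Image(\allocationMatrix)=\Image(\vect{A}')$ is precisely the fact your per-unit computation proves. One minor notational inconsistency on your side: in item~1 you write $\matr{J}_i=\vO{i}$, which is the Jacobian of $\vP{i}\in\nR{3}$ with respect to $\uP{i}$, whereas your factorization $\allocationMatrix=\vect{A}'\matr{J}$ suggests $\matr{J}_i$ should be the Jacobian of the reduced coordinate $\VP_i'=\uP{i}$, i.e.\ the scalar $1$; either convention works, but you should keep straight whether you are factoring through $\vect{A}$ or through $\vect{A}'$.
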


\begin{proof}
\rev{As $\epsilon \rightarrow 0$, we can linearize the applied wrench with respect to $\vect{H}_0$ as follows:
\begin{align}
    \totalWrench &= \totalWrench_0 + \left. \frac{d \totalWrench}{d \vect{H}}\right|_{\vect{H}_0}(\vect{H} - \vect{H}_0) \\ \nonumber
    \totalWrench &= \totalWrench_0 + \allocationMatrix(\vect{H}_0)\; \dot{\vect{H}} dt 
\end{align}
where $\totalWrench_0$ is the wrench applied by the control input $\vect{V}_0'$.
Correspondingly, }$\wrenchSet_0$ can be expressed as a codomain of $\cntrlDer$ as follows:
\begin{align}\label{eq:w_0_def_F}
\wrenchSet_0 = \{\totalWrench | \totalWrench = \rev{\totalWrench_0} + \allocationMatrix(\vect{H}_0) \dot{\vect{H}} dt \; \forall \dot{\vect{H}} \in \cntrlDer \}
\end{align}
where $dt \in \nR{}_{>0}$ is a time constant used for calculation purposes.
Consequently, we can see from \eqn\eqref{eq:w_0_def_A} and \eqn\eqref{eq:w_0_def_F} that 
$\wrenchSet_0 := \Image(\allocationMatrix(\vect{H}_0)) = \Image(\vect{A}') \; \forall \rotMatB^h \in \mathcal{R}$, and as such, proposition \ref{prop:rA_rF} holds.
\end{proof}
Following the above proposition, and the corresponding proof, we can see that computing the LHI from $\allocationMatrix$ provides the same $\momentSetLocal$ that could have been computed from $\vect{A}'$.

Finally, we assume $\cntrlDer$ to be fixed for each actuator type. While this assumption and the introduction of \eqn\eqref{eq:w_0_def_F} allow for ease of computation of $\wrenchSet_0$, we do note that in reality $\cntrlDer$ changes depending on $\vect{H}_0$. Consequently, the corresponding LHI can only be used as a relative metric that compares the hovering stability of one platform with respect to another, or the hovering stability of different hovering orientations $\rotMatB^h \in \mathcal{R}$ for the same platform. 

\section{Critically Statically Hoverable Platforms}\label{sec:csh}

In what follows, we refer to Critically Statically Hoverable (CSH) platforms as any platform that can statically hover, while achieving one of the actuation properties UDT to OD as long as it can actuate its propellers, while it cannot statically hover with its propellers fixed.
While this condition can occur with $N\geq 4$, it can easily be avoided \rev{(for example, the classical quadrotor has $N=4$ and can hover without any propeller tilting)}. On the other hand, any MRAV with $N \leq 3$, and as demonstrated in proposition \ref{prop:hover_n_3}, cannot hover unless its propellers are actuated. As such, and following the previous propositions, we can state the following Lemma:

\begin{lem}\label{lem:CSH}
Any multirotor aerial vehicle, with $N \leq 3$ that can achieve static hovering is a Critically Statically Hoverable platform.
\end{lem}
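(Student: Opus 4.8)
The plan is to unpack the definition of a Critically Statically Hoverable (CSH) platform into its two constituent requirements and verify each one under the hypotheses $N\le 3$ and ``the platform achieves static hovering.'' Concretely, I must show (i) that with actuated propellers the platform statically hovers while realizing one of the actuation properties UDT--OD, and (ii) that with its propeller orientations frozen at $\orientP_0$ the platform cannot statically hover. The conclusion then follows by matching these two facts against the definition of a CSH platform.

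Requirement (i) is essentially the hypothesis. Since the platform achieves static hovering with tiltable propellers, its vector allocation matrix $\vect{A}'$ satisfies conditions \eqrefs{eq:hover_1}{eq:hover_2}: in particular $\rank(\vect{A}_m')=3$, and the control input in \eqn\eqref{eq:hover_2} maps to a nonzero force lying in $\ker(\vect{A}_m')$. The latter forces $\rank(\vect{A}_f')\ge 1$, and via the stacked-matrix identity $\rank(\vect{A}')=\rank(\vect{A}_m')+\rank(\vect{A}_f'|_{\ker\vect{A}_m'})$ it also gives $\rank(\vect{A}')\ge 4$; combined with $\rank(\vect{A}')\le 6$ this leaves only the rank profiles catalogued in Definitions \ref{dfn:udt}--\ref{dfn:OD}. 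I would make this explicit as a brief case split: $\rank(\vect{A}_f')=1$ forces $\rank(\vect{A}')=4$, i.e.\ UDT; otherwise $\rank(\vect{A}_f')\in\{2,3\}$ with $\rank(\vect{A}')\in\{5,6\}$, i.e.\ MDT, FA, or OD. Hence the platform realizes one of UDT--OD as long as it can actuate its propellers.

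For requirement (ii) I would invoke Proposition \ref{prop:hover_n_3}. Freezing the propeller orientations reduces $\vect{A}'$ to $\vect{A}_{fixed}'=\vect{F}_{fixed}\in\nR{6\times N}$, and the rank bound derived in the proof of that proposition yields $\rank(\vect{A}_{m,fixed}')\le N-1\le 2$ for $N\le 3$. Therefore condition \eqref{eq:hover_1} fails with fixed propellers, so the platform cannot \emph{achieve} static hovering---a fortiori cannot sustain it---without actuating its propellers. Assembling (i) and (ii) yields exactly the two defining properties of a CSH platform.

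The one place needing care is terminology: Proposition \ref{prop:hover_n_3} is phrased in terms of \emph{sustaining} hovering while the CSH definition asks about \emph{achieving} it, but this is harmless since the obstruction $\rank(\vect{A}_{m,fixed}')\le 2$ already violates the necessary condition \eqref{eq:hover_1} and thus rules out both. The main substantive point to get airtight is the exhaustiveness of the UDT--OD classification among hovering-capable platforms, which is precisely what the rank inequalities in requirement (i) establish; I expect this, rather than the appeal to Proposition \ref{prop:hover_n_3}, to be where the bookkeeping is most easily overlooked.
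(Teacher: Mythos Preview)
Your proposal is correct and follows essentially the same route as the paper: the paper does not give a standalone proof of the lemma but simply states that it follows ``from the previous propositions,'' the operative one being Proposition~\ref{prop:hover_n_3}, which is exactly what you invoke for requirement (ii). Your treatment of requirement (i)---arguing via the rank identity $\rank(\vect{A}')=\rank(\vect{A}_m')+\rank(\vect{A}_f'|_{\ker\vect{A}_m'})$ that any hovering-capable platform lands in one of the UDT--OD bins---is more explicit than anything the paper writes; the paper simply treats that classification as exhaustive by construction and moves on.
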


In what follows we will analyze some example CSH platforms with $N \leq 3$, to demonstrate how such platforms can achieve actuation properties UDT to OD; note that in the following examples we will focus on the minimal configuration that can achieve each corresponding actuation property.

\subsection{UDT CSH platforms}

This class describes platforms that can achieve static hovering while applying forces along one direction. This class is similar in actuation properties to the classical quadrotor platform.
This class can be achieved with $N =3$ and $N = 2$, where the former requires one of the propellers to be actuated along a single orientation, while the later requires both propellers to be actuated, each along a single orientation.
With such platforms, $\vect{A}' \in \nR{6 \times 4}$; as such, the basis of the null space of the moment set $\vect{B}_m \subset \nR{4 \times 1}$, and the hovering direction can be easily calculated as $\zB^0 = \vect{A}_f' \vect{B}_m $.

An example platform with $N = 3$ can be found in \cite{salazar2005stabilization}, which consists of a trirotor platform, with two fixed propellers generating thrust along $\zB$. The third propeller is placed further away from the first two (named the tail propeller), and can rotate about its radial axis.
It is easy to see that such a platform can hover about a single orientation, such that: 
\begin{align}\nonumber
\rotMatB^h \approx \eye{}\\\label{eq:h_0_N3}
\thrusts_0 \approx [\frac{1}{3} \massB g \; \frac{1}{3}\massB g \; \frac{1}{3}\massB g]^T\\ \nonumber
\ai{3} \approx  0
\end{align} 
Note that the hovering orientation and the corresponding controls are slightly deviated from \eqn\eqref{eq:h_0_N3} to ensure moment cancellation; note that the larger the arm length connecting the platform CoM to the propellers' CoM, the smaller this deviation.\\
Similarly, such platforms with $N=2$ can be found in \cite{donadel2014modeling}, where the platform can only hover about a single orientation, such that: 
\begin{align}\nonumber
\rotMatB^h = \eye{}\\ 
\thrusts_0 = [\frac{1}{2}\massB g \; \frac{1}{2}\massB g ]^T\\\nonumber
\orientP_0 = [0 \; 0]^T
\end{align}

\subsection{MDT CSH platforms}

This class describes platforms that can achieve static hovering while applying forces along a plane.
Such platforms are rarely seen in the literature, however, they can be achieved with $N = 3$ with models similar to \cite{papachristos2013model}, or the model presented by the commercial UAV \emph{Cerberus RC Tilt-Rotor}\footnote{http://skybornetech.com/}.
The latter platform is a trirotor, T-shaped, with two frontal propellers being actively independently tilted about their radial axes.

While no design in the literature demonstrated any MDT CSH platform with $N = 2$, such a platform can be achieved, for example, with a birotor where one propeller is actuated radially, while the other is actuated both radially and tangentially.

\subsection{FA and OD CSH platforms}


In this subsection we discuss both FA and OD CSH platforms. First let us note that to achieve FA a platform with $N=3$ ($N = 2$) requires at least $3$ independent propeller actuation ($4$ independent propeller actuation). Correspondingly, $\vect{H}$ should have at least $6$ control inputs.

\begin{prop}\label{prop:CSH_OD_realizability}
A FA CSH with $DoF = 6$ and with $N=3$ or $N=2$ can achieve OD, if each actuation angle does not have a limit, \ie
\begin{align}
\forall (\ai{i}, \betai{i}) \in \orientP \;|\; \ai{i} \in \{-\pi \; \pi\}\; and \; \betai{i} \in \{-\pi \; \pi\}
\end{align}
even if all propellers can only generate uni-directional thrust.
\end{prop}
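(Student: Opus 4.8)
The plan is to show that, under the unbounded-angle hypothesis, the feasible control set $\vSet'$ coincides with the entire control space, so that $\vect{A}'$ maps onto all of $\nR{6}$; omnidirectionality then follows at once from the definition of OD together with the FA rank conditions. First I would fix the dimension bookkeeping that ``FA, $DoF=6$, $N\le 3$'' forces in the efficient design where every propeller is actuated: for $N=3$ each propeller is tiltable about one axis, so $\vP{i}'\in\nR{2}$ and $\VP'\in\nR{6}$; for $N=2$ each propeller is tiltable about two axes, so $\vP{i}'\in\nR{3}$ and again $\VP'\in\nR{6}$. In both cases $\vect{A}'\in\nR{6\times6}$, and the FA hypothesis $\rank(\vect{A}')=6$ makes $\vect{A}'$ a linear isomorphism of $\nR{6}$.

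The key step, and the only place where the unbounded-angle hypothesis enters, is to prove $\vSet'=\nR{6}$ despite the uni-directionality of the thrusters. For a single tiltable axis the attainable thrusts of propeller $i$ are $\{\,\uP{i}\,\vO{i}(\ai{i}) : \uP{i}\ge 0,\ \ai{i}\in[-\pi,\pi]\,\}$; since $\ai{i}\mapsto\vO{i}(\ai{i})$ traces the entire unit circle of the propeller's tilt plane while $\uP{i}$ ranges over all of $[0,\infty)$, the product sweeps that whole plane, so $\vP{i}'$ ranges over all of $\nR{2}$. The identical argument, with the two-angle parametrization $\vO{i}(\ai{i},\betai{i})$ covering the unit sphere, gives that $\vP{i}'$ is unrestricted in $\nR{3}$ in the $N=2$ case. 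Concatenating over the propellers yields $\vSet'=\nR{6}$. This is exactly where a bound on the tilt angle would be fatal: a restricted angle sweeps only an arc, so a uni-directional propeller would reach only a half-plane of thrusts and $\vSet'$ would be a proper half-space.

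It then remains to assemble the pieces. Since $\vSet'=\nR{6}$ and $\vect{A}'$ is invertible, $\wrenchSet=\vect{A}'(\nR{6})=\nR{6}$. Restricting the isomorphism $\vect{A}'$ to the $3$-dimensional subspace $\ker(\vect{A}_m')$, on which the moment vanishes, it maps bijectively onto $\nR{3}$ (its kernel there is $\ker(\vect{A}_f')\cap\ker(\vect{A}_m')=\{\vect{0}\}$ because $\vect{A}'$ is injective), so $\forceSet=\vect{A}_f'\big(\ker(\vect{A}_m')\big)=\nR{3}$. Hence the largest sphere inscribed in $\forceSet$ is unbounded, i.e.\ $\text{ODL}\ge\massB g$, and together with $\rank(\vect{A}_f')=3$ and $\rank(\vect{A}')=6$ from the FA hypothesis, the platform meets the definition of OD.

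The main obstacle is the step $\vSet'=\nR{6}$: one must verify that the non-negativity of the thrusts does not obstruct filling the control space, which works only because a non-negative scalar times a direction sweeping a full circle (or sphere) already exhausts the surrounding subspace — precisely why the unbounded-angle assumption is indispensable. The remaining work (image of an invertible square map, restriction to $\ker(\vect{A}_m')$) is routine, and the dimension count at the outset is what makes it routine, so I would state that count carefully up front.
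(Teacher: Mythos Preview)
Your argument and the paper's take genuinely different routes. The paper does not attempt a general algebraic proof at all: it simply exhibits one concrete design --- the trirotor with each propeller tiltable about its radial axis --- computes its force set $\forceSet$ numerically (displayed in a figure), and reads off that $\text{ODL}\approx u_{\max}\sqrt{3}$, so OD holds once $u_{\max}\geq \massB g/\sqrt{3}$. In other words, the paper treats the proposition as an existence claim and discharges it by example, whereas you try to establish the universal statement via invertibility of the square matrix $\vect{A}'$. Your approach is more explanatory and actually covers the $N=2$ case, which the paper's example does not.

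That said, there is a real gap in your argument. You take $\uP{i}\in[0,\infty)$ so that each $\vP{i}'$ fills its whole plane (or all of $\nR{3}$) and hence $\vSet'=\nR{6}$. In the paper's model the thrusts are bounded by $u_{\max}$, so $\vSet'$ is only a product of bounded disks or balls, $\forceSet$ is bounded, and the ODL is finite --- this is exactly why the paper's proof ends with a threshold condition on $u_{\max}$ rather than a blanket OD verdict. Your sentence ``the largest sphere inscribed in $\forceSet$ is unbounded'' is therefore not justified under the paper's assumptions. The repair is easy: your rank argument still shows $\vect{0}\in\interior(\forceSet)$ (since $\vect{A}'$ is a linear isomorphism and $\vSet'$ is a full-dimensional neighbourhood of the origin), hence $\text{ODL}>0$, and because $\forceSet$ scales linearly with $u_{\max}$ one recovers $\text{ODL}\geq\massB g$ for $u_{\max}$ sufficiently large. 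You should also flag that your dimension count treats only the symmetric tilt splits $(1,1,1)$ for $N=3$ and $(2,2)$ for $N=2$; a $(2,1,0)$ split for $N=3$ still gives $DoF=6$ but leaves one propeller fixed, so $\vSet'$ is merely a half-space and the step ``$\vSet'=\nR{6}$'' fails --- which is consistent with the paper's own caveat that not every FA CSH with $DoF=6$ achieves OD.
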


Note that the above condition is sufficient but not necessary, \ie  a FA CSH with $DoF = 6$ can still achieve OD while its actuation angles are limited.
Also note that not any FA CSH with $DoF = 6$ can achieve OD. However, proposition \ref{prop:CSH_OD_realizability} contrasts with the condition from paper \cite{tognon2017}, where authors in \cite{tognon2017} show that for a platform with fixed propellers, $DoF=7$ is the minimum requirement for a platform to achieve OD.

\begin{proof}
As a proof, let us look at an example platform with $N=3$ and $DoF = 6$, where each propeller can rotate about its radial axis. The corresponding force set $\forceSet$ is shown in figure \ref{fig:force:tri_a123}.
It can be seen from this figure that if the maximum thrust $u_{max}$ is such that $u_{max} \geq \frac{\massB g}{ \sqrt{3}} $, the platform can achieve omnidirectional flight with $DoF = 6$.
\end{proof}

\subsection{Weight Concern}

While adding servomotors to the platform allows the increase of its $DoF$, and allow a platform with $N = 2$ or $3$ to achieve UDT, MDT, FA and OD, we have to note that adding such servomotors adds additional weight to the platform. 
The added weight comes from the servomotor weight, the platform design that allows the incorporation of the servomotor, and any additional mechanism necessary to transfer the rotation motion from the servomotor to the thruster.

In contrast with adding extra thrusters, usually each thruster adds additional lift to the platform. Consequently, while adding thrusters to the platform adds weight, it can also increase the platform's total lift. On the other hand, adding servomotors allows an increase in the platform's $DoF$s without contributing to the platform's lift capacity.
\begin{figure}[tbh]
\centering
\begin{subfigure}[b]{0.25\columnwidth}
	\centering
	\includegraphics[trim=150 0 180 0, clip,width=1\columnwidth]{./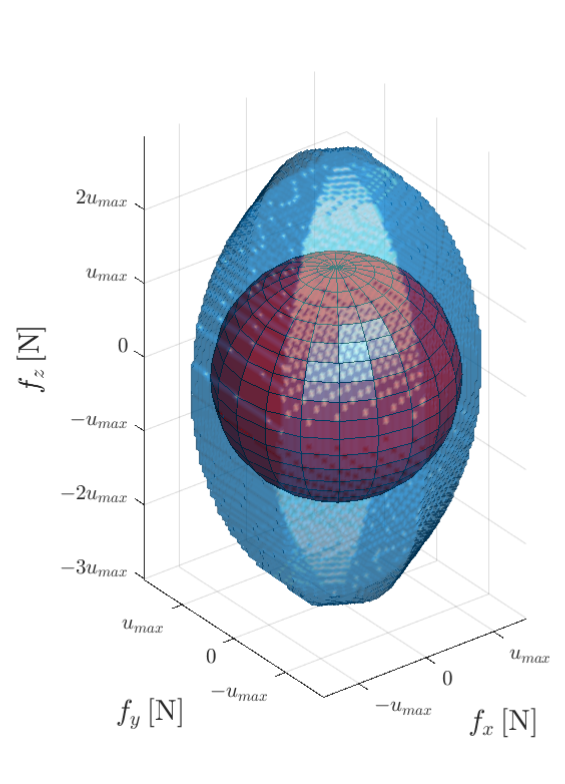}
	\caption{}
	\label{fig:force:tri_a123}
\end{subfigure}
\hfill
\begin{subfigure}[b]{0.3\columnwidth}
	\centering
	\includegraphics[trim=100 0 180 0, clip,width=1\columnwidth]{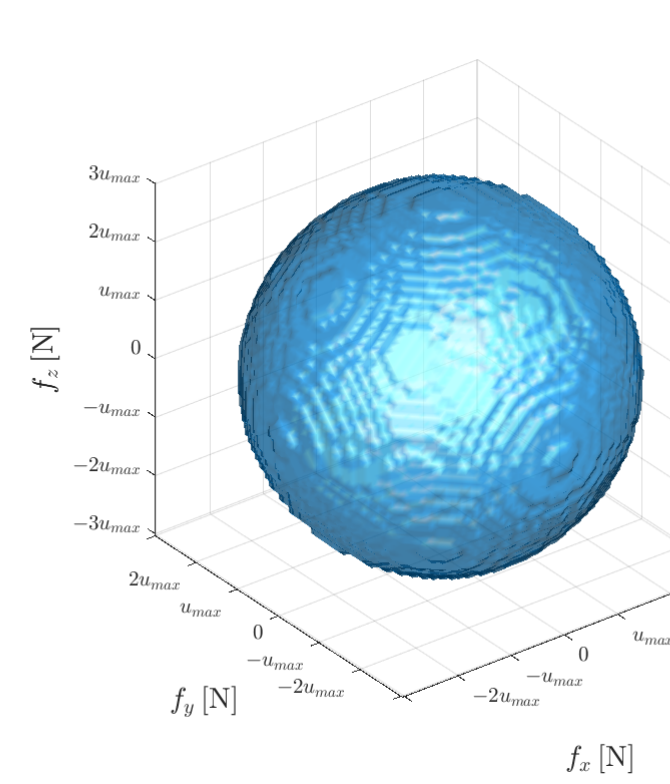}
	\caption{}
		\label{fig:force:tri_a123_b123}
\end{subfigure}
\hfill
\begin{subfigure}[b]{0.4\columnwidth}
	\centering
	\includegraphics[trim=40 20 120 0, clip,width=1\columnwidth]{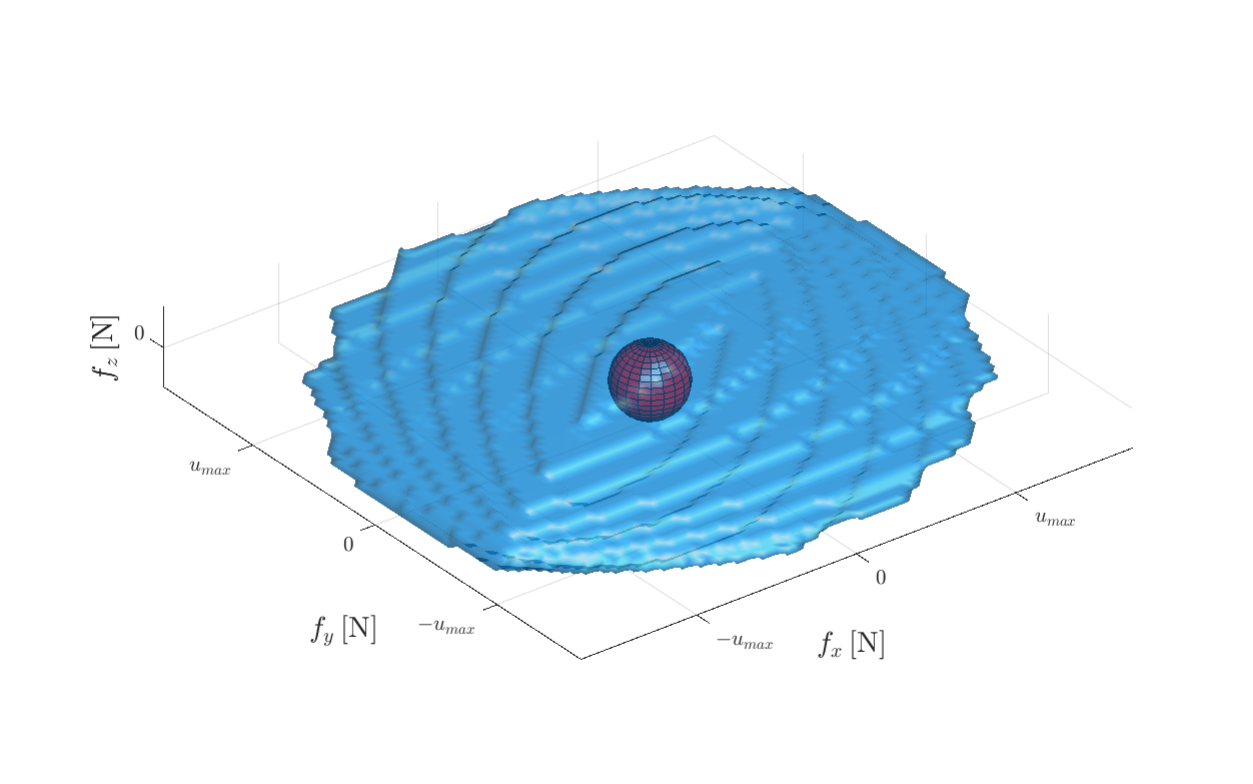}
	\caption{}
	\label{fig:force:tri_a12_b12}
\end{subfigure}
\caption{Force set $\forceSet$ and ODL sphere of a trirotor platform with tiltable propellers, where:\\
a) each propeller can be tilted independently about its radial axis; $N = 3$, DoF$=6$; ODL $\approx u_{max} \sqrt{3} $;\\
b) each propeller can be actuated independently about its radial and tangential axis; $N = 3$, DoF$=9$; ODL $= 3 u_{max}$;\\
c) each propeller can be actuated independently about its radial and tangential axis, with a failed third propeller; $N = 2$, DoF$=6$; ODL $= 0.0246 u_{max}$.
}
\end{figure}
\section{Case Study}\label{sec:case_study}

In this section we will demonstrate the design of a CSH OD platform consisting of $N =3$ propellers, each actuated with two servomotors (\ie $DoF = 9$). We will then show the fail-safe robustness of this platform to the failure of one of its propellers.
Then we will study its ability to sustain hovering at different orientations, and demonstrate the effect of the LHI on the platform's dynamics.

\begin{figure}[tbh]
\centering
	\centering
	\includegraphics[trim=0 0 0 0, clip,width=0.80\columnwidth]{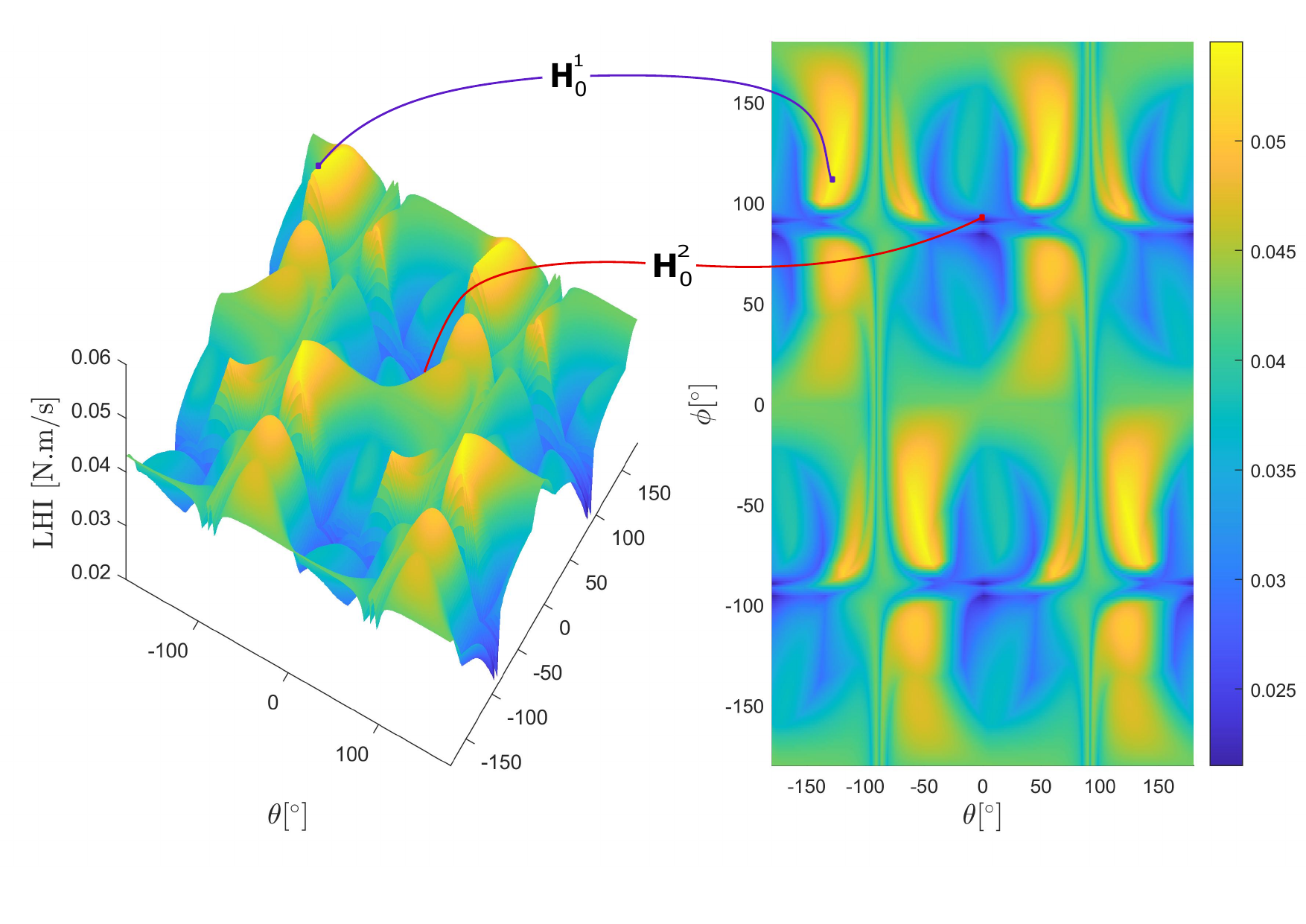}
	\caption{}	
	\label{fig:LHI}
	\caption{LHI of a dual-tilt trirotor platform while hovering at different orientations. Each orientation is parametrized by the platform's rotation about $\xB$: $\phi$ and $\yB$: $\theta$.\\
	Left) perspective view of the LHI metric at different orientation;\\
    right) level plot of the same LHI.	
Two hovering points are highlighted in this figure:\\
$\vect{H}_0^1$: $\theta = -130^\circ$, $\phi = 110^\circ$, LHI $=0.0539 \;[\rm N.m/s]$; \\
$\vect{H}_0^2$: $\theta = 0^\circ$, $\phi = 90^\circ$, LHI $=0.0215 \;[\rm N.m/s]$.\\
This figure should be viewed in color, preferably on a computer screen.
}
\end{figure}
\begin{figure}[tbh]
\centering
\begin{subfigure}[b]{0.49\columnwidth}
	\centering
	\includegraphics[trim=250 0 250 0, clip,width=1\columnwidth]{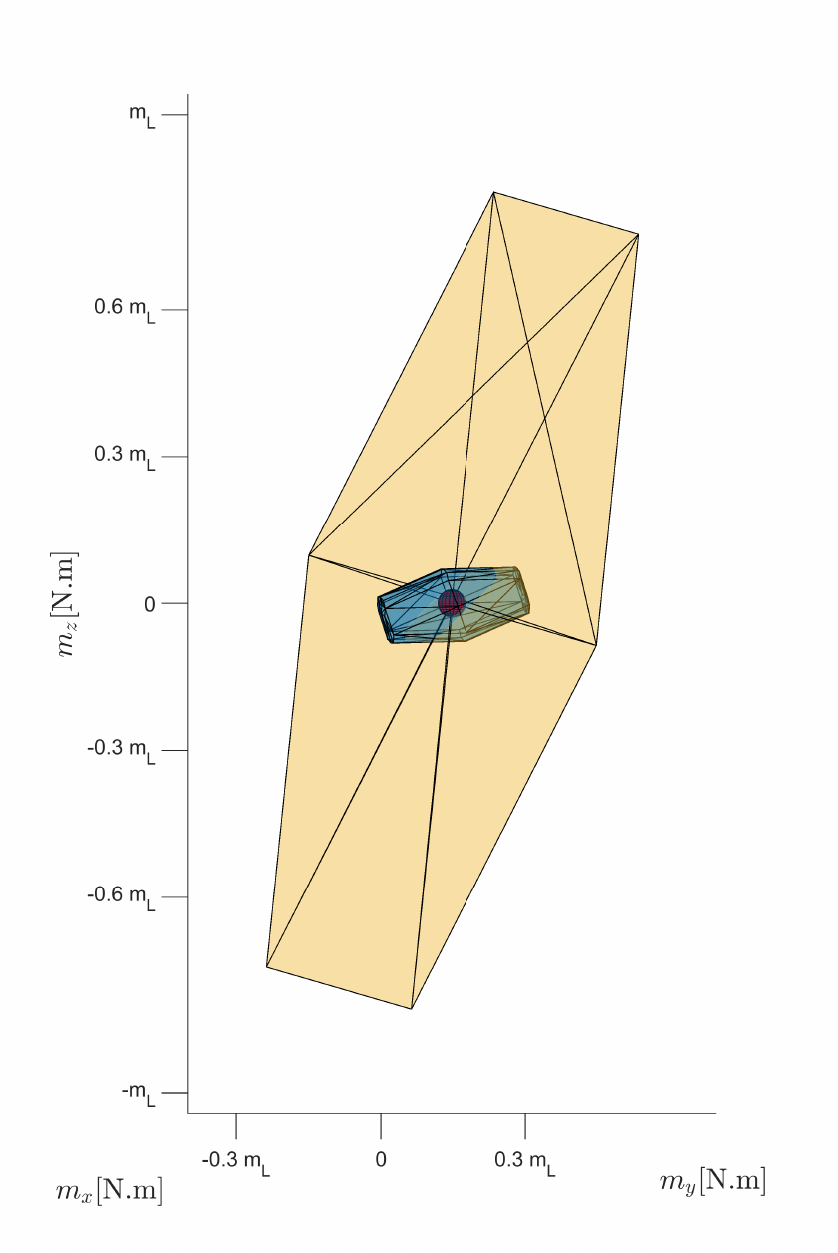}
	\caption{$\vect{H}_0^1$}	
	\label{fig:moment_LHI_2}	
	\end{subfigure}
	\hfill
	\begin{subfigure}[b]{0.49\columnwidth}
	\centering
	\includegraphics[trim=250 0 250 0, clip,width=1\columnwidth]{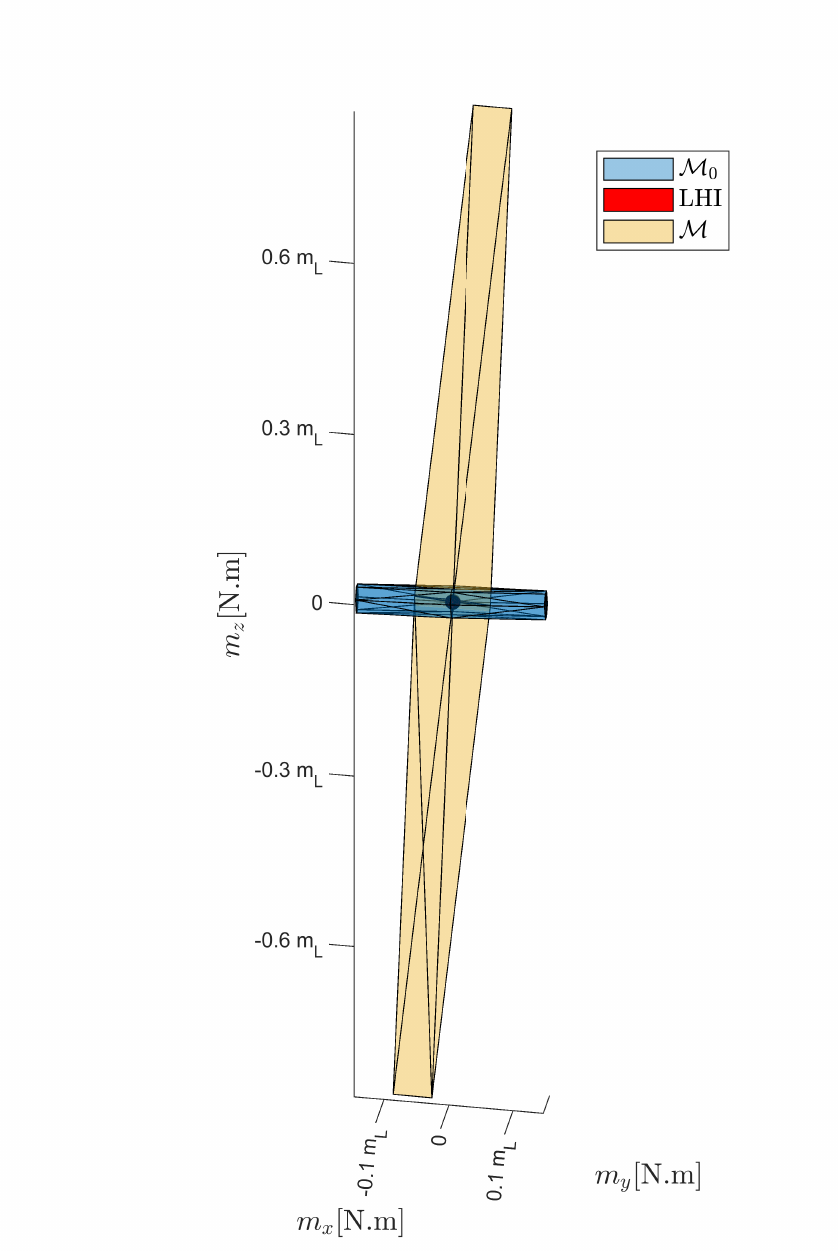}
	\caption{$\vect{H}_0^2$}	
	\label{fig:moment_LHI_2}
	\end{subfigure}
	\caption{For $\vect{H}_0^1$ and $\vect{H}_0^2$: Moment set $\momentSet$ with fixed propeller orientation, local moment set $\momentSetLocal$ and the corresponding inscribed sphere with radius LHI.\\
In this figure $m_l = \massB l$.\\
This figure should be viewed in color, preferably on a computer screen.
}
	\label{fig:moment_LHI}
\end{figure}

\subsection{Design Specifications}
As stated above, let us assume the platform to be actuated with $N =3$ thrusters, where each thruster has a lift coefficient $c_f$ and a drag coefficient $c_\tau$. For the design purpose, let us assume $c_f$ and $c_\tau$ to be variables, and their ratio $r = \frac{c_\tau}{c_f}$ to be constant.
Let us also assume all propellers to be coplanar, \ie all propellers' CoMs are in a plane with the platform's CoM. 
Finally, let us assume all propellers to be equally spaced about the platform's CoM, with the distance between the platform's CoM and the CoM of each corresponding propeller to be denoted by $l$.
As stated above, each propeller is assumed to be actuated with $2$ servomotors, controlling independently its radial and tangential axes. For convenience, let us assume that the servomotors do not have any angle limits. 
As such, the platform's vectoring allocation matrix can be expressed similar to \eqn\eqref{eq:allocation_all}, where

\begin{align}
r_1 = -r_2 = r_3 = r \\
\pPB{1} = [l \; 0 \; 0]^T\\
\pPB{2} = [-\frac{l}{2} \; \frac{l\sqrt{3}}{2} \; 0]^T\\
\pPB{3} = [-\frac{l}{2} \; -\frac{l\sqrt{3}}{2} \; 0]^T
\end{align}
For convenience, and to be able to be able to compute numerically the platform's set of feasible forces $\forceSet$, set of feasible moments $\momentSet$, and the platform's LHI, let us assume $l = 0.2 \;[\rm m]$ and $r = 0.012 \;[\rm m]$.
Finally, let us denote by $\vect{A}_{1\rightarrow 3}$ as the $\vect{A}$ matrix in the case where all three propellers are functional, while $\vect{A}_{1\rightarrow 2}$ as the corresponding matrix in the case where only the first two propellers are functional.

\subsection{Static Hovering Realization}

First of all, it is easy to see that the platform is fully actuated (FA) in the case where all its propellers are functional and in the case where only two of its propellers are functional as follows:

\begin{align}
\rank(\vect{A}_{1\rightarrow 3, f}) = 3 \; & \; \rank(\vect{A}_{1\rightarrow 3}) = 6 \\
\rank(\vect{A}_{1\rightarrow 2, f}) = 3 \; & \; \rank(\vect{A}_{1\rightarrow 2}) = 6 
\end{align}

Then, to understand if the platform can hover in all orientations let us plot the platform's force set $\forceSet$: figure~\ref{fig:force:tri_a123_b123} shows $\forceSet$ for the platform with all propellers functional, while figure~\ref{fig:force:tri_a12_b12} shows $\forceSet$ of the platform with one propeller failure. \rev{Note that the plots in these figures were constructed using the method described in \cite{hamandi2021ijrr}.
Also no}te that these two plots are shown with respect to the maximum thrust $u_{max}$ each propeller can produce, where from \eqn\eqref{eq:force_set} we can see that the corresponding set of feasible forces is linearly dependent on $u_{max}$ if all propellers are identical.
While in the case all propellers are functional $\forceSet$ is a sphere with radius equal to the ODL, figure~\ref{fig:force:tri_a12_b12} shows the ODL of the platform with a failed propeller. We can see from the two plots that the platform can achieve OD flight if $\massB \leq 3 u_{max}$ if all three propellers are functional, while in the case where only two of its propellers are functional, the required mass decreases to $\massB \leq 0.0246 u_{max}$.
Following the data from the two plots, if $\massB$ and the platform's thrusters are chosen such that $\massB \leq 0.0246 u_{max}$, the platform should be able to achieve OD flight with all its propellers functional, and in the case one of its propellers fails.
Note that in the case of a failed propeller, the weight to maximum thrust requirement is difficult to achieve with off the shelf components. As such, it is usually preferable to design the platform to be able to achieve static hovering at any orientation after propeller failure. Note from figure~\ref{fig:force:tri_a12_b12} that the corresponding platform could still hover with $\zB$ aligned with $\xW$ or $\yW$, and with a weight to thrust ratio $\massB \leq 1.48 u_{max}$.

\subsection{Static Hovering Sustainability}

While the above platform is OD, the platform does not have the same hovering ability $\forall \rotMatB^h \in \mathcal{R}$. To understand the effect of the hovering orientation $\rotMatB^h$ on the platform's hovering ability, we first plot the LHI of the platform at different orientations in figure~\ref{fig:LHI}.
This figure shows that the platform's LHI ranges~$\in \{0.0215 \; 0.0545 \} [\rm N.m/s]$ .

To highlight Critical Static Hovering property of the platform, we show in figure~\ref{fig:moment_LHI} for two distinct attitudes: \begin{inparaenum}
\item the corresponding set of feasible moments if all propellers are held stationary at their corresponding orientation $\momentSet$; 
\item and the corresponding set of feasible moment changes at the current orientation $\momentSetLocal$.
\end{inparaenum}
Note that to be able to compute numerically the corresponding convex hulls, we assumed $\max \thrustsDer = 200 \;[\rm Hz/s]$ while $\max \orientPD = 4.1 \;[\rm rad/s]$, where these assumptions match with a number of thrusters and servomotors that we surveyed.

First of all, from figure~\ref{fig:moment_LHI} we can clearly see that, \rev{if all propellers are held stationary}, the set of feasible moments at hover is a two dimensional plane, and as such, the platform can be considered a CSH platform.
From the same figure, we can also see that $\vect{0} \in \interior(\momentSetLocal)$ in both cases, although the corresponding LHI changes in diameter.
As such, the platform is able to apply moments at the corresponding hovering orientation by changing the propellers' orientations and the applied thrust.
In addition, as the LHI at the two hovering points $\vect{H}_0^1$ and $\vect{H}_0^2$ changes drastically, it is expected that the platform's dynamic response will change depending on the corresponding platform's hovering attitude.

\begin{figure}[tbh]
\centering
\includegraphics[trim=0 0 0 0, clip,width=0.5\columnwidth]{./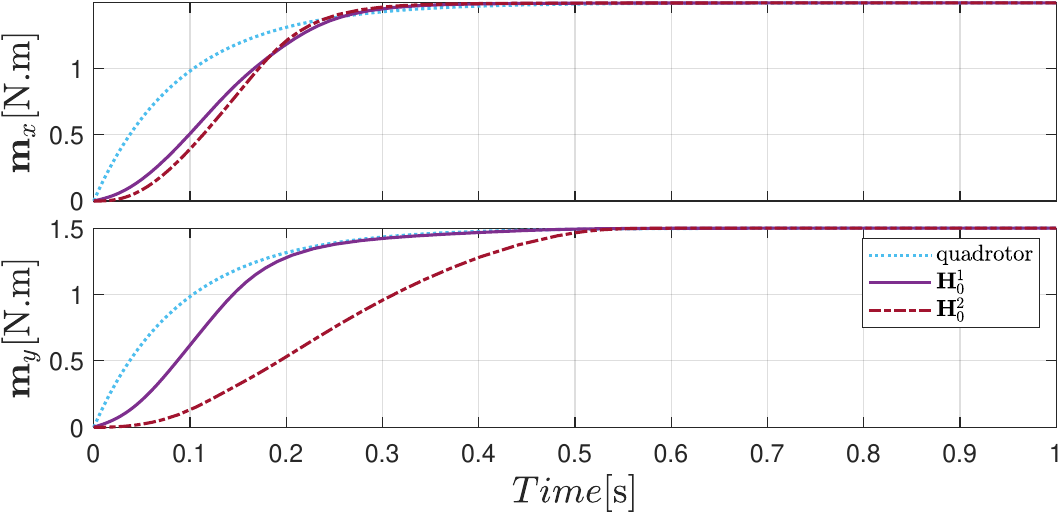}
\caption{Moment step response dynamics of the dual-tilt trirotor at the two hovering points $\vect{H}_0^1$ and  $\vect{H}_0^2$, while applying\\
(top) $\totalMoment_x = 1.5\;[\rm N.m]$,\\
(bottom) $\totalMoment_y = 1.5\;[\rm N.m]$.}\label{fig:moment_tracking}
\end{figure}

To understand this phenomena, \rev{we study in what follows the ability of a platform to apply a desired moment, or to change the orientation of its applied force. In each of the simulations, the platform is applying a control input $\vect{H}_0$ that allows static hovering at the desired orientation, while applying zero moment.} Figure~\ref{fig:moment_tracking} shows the step response of the platform when trying to achieve a desired moment along $\xB$ and $\yB$. The plot also shows the behavior of a classical coplanar colinear quadrotor while trying to apply similar moments.
We can see from figure~\ref{fig:moment_tracking} that while the platform is an OD platform, and thus can hover in all orientations, its ability to modify its moment along $\yB$ is largely decreased while hovering at $\vect{H}_0^2$ as compared to $\vect{H}_0^1$. Moreover, in both plots, the applied moment at $\vect{H}_0^2$ in the first $50 \; ms$ is negligible as compared to the one applied by the quadrotor. This is due to the large change in propeller orientation required to applied the corresponding moment.
While such propeller orientation changes are less visible in the case of $\vect{H}_0^1$, we can see that the quadrotor, in both cases can apply moment faster than the dual-tilt platform.

Finally, it is noted that the platform's LHI also reflects its ability to change the force orientation at different hovering orientations. Figure~\ref{fig:force_tracking} shows the force orientation tracking dynamics of the dual-tilt platform at $\vect{H}_0^1$ and $\vect{H}_0^2$. In figure~\ref{fig:force_tracking}, the platform's force direction is modified such that the desired force direction is rotated to a desired direction as follows:
\begin{align}
\vect{d}^d = \rotMat_x(5^\circ) \vect{d}(\vect{H}_0)
\end{align}

\begin{figure}[tbh]
\centering
\includegraphics[trim=0 0 0 0, clip,width=0.5\columnwidth]{./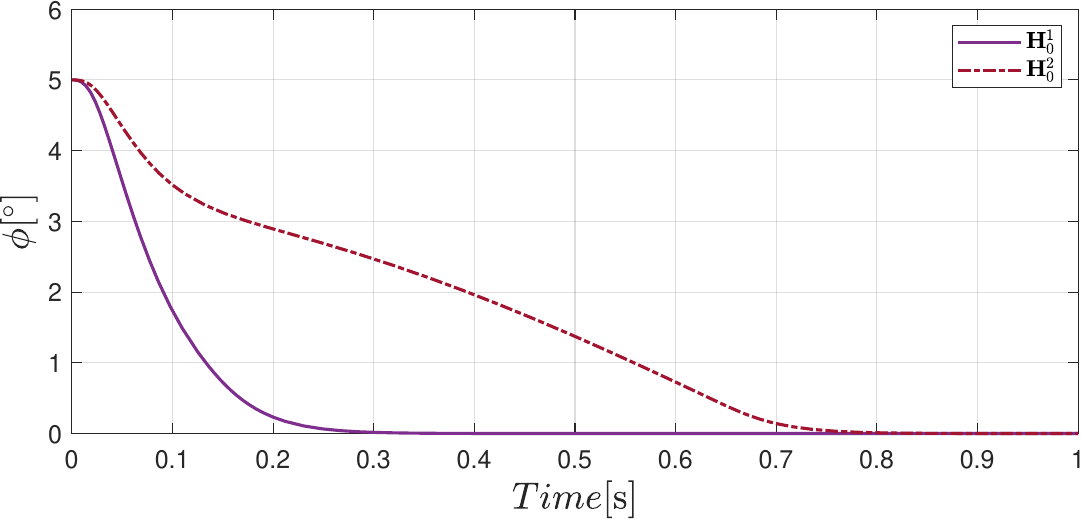}
\caption{Force orientation response dynamics of the dual-tilt trirotor at the two hovering points $\vect{H}_0^1$ and  $\vect{H}_0^2$. This figure shows the angular error between the force orientation $\vect{d}$ at different time steps and the desired one $\vect{d}^d$.}\label{fig:force_tracking}
\end{figure}

Figure~\ref{fig:force_tracking} shows clearly the ability of the dual-tilt trirotor to apply the desired force orientation change in approximately $0.2 \;[\rm s]$ at $\vect{H}_0^1$, while requiring $0.7 \;[\rm s]$ at $\vect{H}_0^2$.
\section{Conclusions}\label{sec:conc}

In this paper we present a thorough analysis of the static hovering ability of MRAV platforms with tiltable propellers, and present the algebraic conditions to achieve and sustain desired static hovering properties. To do so, we formally introduce and model such platforms, and present an allocation map for any MRAV with tiltable propellers. Then we present the conditions for a platform to hover at a single orientation, in a plane, or in $\nR{3}$. Furthermore, we present the conditions for a platform to sustain achieved hovering, and present a numerical metric that reflects the stability of each hovering orientation. Based on the presented conditions, we introduce a class of MRAVs referred to as Critically Statically Hoverable (CSH) platforms, which are platforms that cannot sustain static hovering without active propeller tilting. In addition, we present a thorough simulation study to test and validate the presented conditions. These simulations show clearly the effect of the presented numerical metric (LHI) on the platform's dynamic response and ability to apply different forces and moments.
\rev{Moreover, these simulations show clearly that a platform with $N\leq3$ is a CSH platform, where the dimension of the moment set with fixed propellers is less than $N$, even if the platform is OD when all of its propellers are fully functional.}

Based on the fundamental conditions and properties discussed in this paper, future studies could include the optimization of minimal CSH platform design to maximize the LHI metric at different orientations. Alternatively, future studies could use the LHI metric in the control loop, where a controller that advantages orientations with higher LHI is expected to achieve more stable static hovering.

Moreover, in the case of MRAV with $N \geq 4$, CSH conditions could occur after propeller failure. Accordingly, it would be interesting to extend the conditions from this work to understand the fail-safe robustness of generic MRAVs with tiltable propellers.

\section*{Acknowledgment}
The authors acknowledge the support provided by the Khalifa University of Science and Technology Award No. RC1-2018-KUCARS and Project CIRA-2020-082 .
\bibliography{./Bib}

\end{document}